\newtheorem{theorem}{Theorem}[section]
\newtheorem{lemma}[theorem]{Lemma}
\title{Learning Value Functions from \\ Undirected State-only Experience}
\author{Matthew Chang%
\thanks{~denotes equal contribution. Project website:
{\url{https://matthewchang.github.io/latent_action_qlearning_site/}.}} \; \; 
Arjun Gupta\hbox to 0pt{$^*$} \; \; 
Saurabh Gupta \\
University of Illinois at Urbana-Champaign\\
\texttt{\{mc48, arjung2, saurabhg\}@illinois.edu}\\
}
\begin{document}

\maketitle

\begin{abstract}
This paper tackles the problem of learning value functions from undirected
state-only experience (state transitions without action labels \ie $(s,s',r)$
tuples). We first theoretically characterize the applicability of Q-learning in
this setting. We show that tabular Q-learning in discrete Markov decision
processes (MDPs) learns the same value function under any arbitrary refinement
of the action space. This theoretical result motivates the design of Latent
Action Q-learning or LAQ, an offline RL method that can learn effective value
functions from state-only experience. Latent Action Q-learning (LAQ) learns
value functions using Q-learning on discrete latent actions obtained through a
latent-variable future prediction model. We show that LAQ can recover 
value functions that have high correlation with value functions learned using
ground truth actions. Value functions learned using LAQ lead to sample
efficient acquisition of goal-directed behavior, can be used with
domain-specific low-level controllers, and facilitate transfer across
embodiments. Our experiments in 5 environments ranging from 2D grid world to 3D
visual navigation in realistic environments demonstrate the benefits of LAQ
over simpler alternatives, imitation learning oracles, and competing methods.

\end{abstract}
\section{Introduction}
\seclabel{intro}
Offline or batch reinforcement learning focuses on learning goal-directed
behavior from pre-recorded data of undirected experience in the form of $(s_t,
a_t, s_{t+1}, r_t)$ quadruples. However, in many realistic applications, action
information is not naturally available (\eg when learning from video
demonstrations), or worse still, isn't even well-defined (\eg when learning
from the experience of an agent with a different embodiment).  Motivated by
such use cases, this paper studies if, and how, intelligent behavior can be
derived from undirected streams of observations: $(s_t, s_{t+1},
r_t)$.\footnote{We assume $r_t$ is observed. Reward can often be sparsely labeled in observation streams with low effort.}

At the face of it, it might seem that observation-only data would be useless towards learning goal-directed policies. 
After all, to learn such a policy, we need to know what actions to execute. 
Our key conceptual insight is that while an
observation-only dataset doesn't tell us the precise action to execute, \ie
the policy $\pi(a|s)$; it may still tell us which states are more likely to
lead us to the goal than not, \ie the value function $V(s)$. For example,
simply by looking at someone working in the kitchen, we can infer that
approaching the microwave handle is more useful (\ie has higher value) for
opening the microwave than to approach the buttons. Thus, we can still make use
of observation-only data, if we focused on learning value functions as opposed
to directly learning goal-directed policies. Once we have learned a good value
function, it can be used to quickly acquire or infer behavior.
Using learned value functions as dense rewards can lead to quick policy
learning through some small amount of interaction in the environment.
Alternatively, they could be used to directly guide the behavior of low-level
controllers that may already be available for the agent (as is often the case
in robotics) without any further training. Furthermore, decoupling the
learning of value functions from policy learning enables deriving behavior
for agents with a different embodiment as long as the overall solution strategy remains similar.
Thus, the central technical question is how to learn a good value function from
undirected observation streams. Is it even possible? If so, under what
conditions? This paper tackles these questions from a theoretical
and practical perspective.

We start out by characterizing the behavior of tabular Q-learning from
\cite{watkins1989learning} under missing action labels. We note 
that Q-learning with naively imputed action labels is equivalent to the TD(0) policy evaluation, which serves as a simple baseline method for deriving a value function. However, depending on the policy that generated
the data, the learned values (without any action grounding) can differ from the
optimal values. Furthermore, it is possible to construct simple environments
where the behavior implied by the learned value function is also sub-optimal.

Next, we present a more optimistic result. There are settings in which
Q-learning can recover the optimal value function even in the absence of the
knowledge of underlying actions. Concretely, we prove that if we are able to
obtain an action space which is a strict refinement of the original action
space, then Q-learning in this refined action space recovers the optimal
value function.

This motivates a practical algorithm for learning value functions from the
given undirected observation-only experience. We design a latent-variable
future prediction model that seeks to obtain a refined action space. 
It operates by predicting $s_{t+1}$ from $s_t$ and a
discrete latent variable $\hat{a}$ from a set of actions $\mathbf{\hat{A}}$
(\secref{action}). Training this latent variable model assigns a discrete
action $\hat{a}_t$ to each $(s_t, s_{t+1})$ tuple. This allows us to employ
Q-learning to learn good value functions (\secref{q}).  The learned value
function is used to derive behavior (\secref{behavior}) either through some
online interaction with the environment, or through the use of domain specific
low-level controllers.

The use of a latent action space for Q-learning allows us to exceed the performance of methods based on policy evaluation~\citep{edwards2019perceptual}, which will learn the value of the demonstration policy, not the optimal value function. Additionally, it side-steps the problem of reconstructing high-dimensional images faced by other state-only value learning methods~\citep{edwards2020estimating}. Other approaches for learning from state-only data rely on imitating the demonstration data, which renders them unable to improve on sub-optimal demonstration data. See \secref{related} for more discussion.

Our experiments in five environments (2D grid world, 2D continuous control,
Atari game Freeway, robotic manipulation, and visual navigation in realistic 3D
environments) test our proposed ideas. 
Our method approximates a refinement of the latent space better than clustering alternatives, and in turn, learns value functions highly correlated with ground truth.
Good value functions in-turn lead to sample efficient
acquisition of behavior, leading to significant improvement over
learning with only environment rewards.  Our method compares well against
existing methods that learn from undirected observation-only data, while being
also applicable to the case of high-dimensional observation spaces in the form
of \rgb images. We are also able to outperform imitation learning methods, even
when these imitation learning methods have access to privileged ground-truth
action information. Furthermore, our method is able to use observation-only
experience from one agent to speed up learning for another agent with a
different embodiment.

\section{Preliminaries}

Following the notation from~\cite{sutton2018rli}, our Markov decision
process (MDP) is specified by $(\mathbf{S},\mathbf{A},p,\gamma)$, where
$\mathbf{S}$ is a state space, $\mathbf{A}$ is an action space, $\gamma$ is the
discount factor, and $p(s', r | s , a)$ is the state/reward joint dynamics
function. It specifies the probability distribution that the agent ends up in
state $s'$, receives a reward of $r$ on executing action $a$ from state $s$.

Offline or batch RL~\citep{lange2012batch, levine2020offline} studies the
problem of deriving high reward behavior when only given a dataset of
experience in an MDP, in the form of a collection of quadruples $(s, a, s',
r)$. In this paper, we tackle a harder version of this problem where instead we
are only given a collection of triplets $(s, s', r)$, \ie experience without
information about intervening actions. In general, this dataset could contain
any quality of behavior. In contrast to some methods (see \secref{related}), we
will not assume that demonstrations in the dataset are of high quality, and
design our method to be robust to sub-optimal data. Using such a dataset, our
goal is to learn good value functions. A value function under a policy $\pi$ is
a function of states that estimates the expected return when starting in $s$
and following $\pi$ thereafter.

In this paper, we will focus on methods based on
Q-learning~\citep{watkins1989learning} for tackling this problem. Q-learning
has the advantage of being \textit{off-policy}, \ie, experience from another
policy (or task) can be used to learn or improve a different policy for a
different task. Q-learning seeks to learn the
optimal Q-function $Q^*(s,a)$ by iteratively updating $Q(s_t,a_t)$ to the Bellman equation. This process converges to the $Q^*$ under mild conditions in many
settings~\citep{watkins1989learning}.

\begin{figure*}
    \centering
    \includegraphics[width=\textwidth]{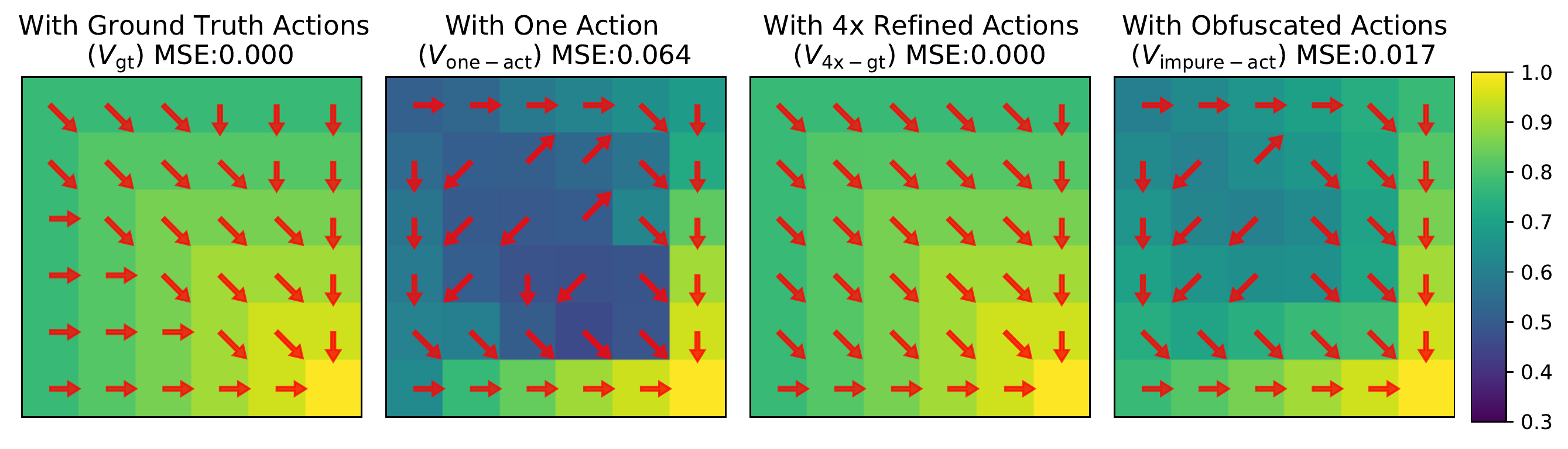}
    \vspace{-8mm}
    \caption{
     We visualize the learned value function when using different
     action labels for Q-learning: ground truth actions, one single action, a $4\times$ refined
     action space, and obfuscated actions. We also report the mean squared error
     from the optimal value function.
     Arrows show the behavior induced by the value function (picking neighboring state with highest value).
     We note that
     a) ignoring the intervening actions works poorly both in terms of value
     function estimates and the induced behavior, b) refined actions don't cause
     any degradation in performance, c) noisy actions that don't adhere the
     refinement constraint cause degradation in performance.
    See \secref{expval} for more details.}
    \vspace{-2mm}
    \figlabel{tabularexp}
\end{figure*}

\section{Characterizing Q-learning without True Action Labels}
\seclabel{theory}
We characterize the outcome of Q-learning in settings where we don't have
ground truth intervening actions in the offline dataset being used for
Q-learning. Without action labels, one could simply assign all transitions the
same label. In this case, Q-Learning becomes TD(0) policy evaluation. The
induced value function isn't the optimal value function for the MDP, but rather
the value according to the policy that generated the dataset. Depending
on the dataset, this could be sub-optimal.

Next, we study if labeling $(s,s',r)$ samples with actions from a different
action space $\mathbf{\hat{A}}$ to construct a new MDP could aid learning.
More specifically, can the optimal Q-function for this new MDP, as obtained
through Q-learning on samples $(s,s',r)$ labeled with actions from
$\mathbf{\hat{A}}$, be useful in the original MDP? We show that under the right
conditions the value function learned under the altered action space
$\mathbf{\hat{A}}$ is identical to the value function learned for the original
MDP.

\subsection{Optimality of Action Refinement}
Assume we have a Markov
Decision Process (MDP) $M$ specified by $(\mathbf{S},\mathbf{A},p,\gamma)$. Let the
action space $\mathbf{A}$ be composed of actions ${a_1, a_2, a_3,...,a_n} \in
\mathbf{A}$. We are interested in the value learned under a modified MDP,
$\hat{M}$ composed of $(\mathbf{S},\mathbf{\hat{A}},\hat{p},\gamma)$. We will show that
if the actions and transitions $\mathbf{\hat{A}}$ and $\hat{p}$ are a
  \textit{refinement} of $\mathbf{A}$ and $p$, then the value function learned
  on $\hat{M}$, $V_{\hat{M}}$ is identical to the value function learned on $M$,
  $V_{M}$.  
  We define actions and transitions in $\hat{M}$ to be a refinement
  of those in $M$ when, a) in each state, for every action in $\mathbf{\hat{A}}$, there is at
  least one action in $\mathbf{A}$ which is functionally identical in the same state, and b) in each state, for
  each action in $\mathbf{A}$ is represented by at least one action in $\mathbf{\hat{A}}$ in that state. 

{\textbf{Definition 3.1}  Given a discrete finite MDP, $M$ specified by $(\mathbf{S},\mathbf{A},p)$ and MDP, $\hat{M}$ specified by $(\mathbf{S},\mathbf{\hat{A}},\hat{p})$, $\hat{M}$ is a \textit{refinement} of $M$ when 
\[\mathop{\forall}_{\hat{a} \in \mathbf{\hat{A}}, s \in \mathbf{S}} \mathop{\exists}_{a \in \mathbf{A}} \mathop{\forall}_{s',r} \hat{p}(s',r | s,\hat{a}) = p(s',r | s,a), \text{ and } \mathop{\forall}_{a \in \mathbf{A}, s \in \mathbf{S}} \mathop{\exists}_{\hat{a} \in
\mathbf{\hat{A}}} \mathop{\forall}_{s',r} \hat{p}(s',r | s,\hat{a}) = p(s',r
| s,a),\]}
  
  Note that 
  this definition of refinement requires a \textit{state conditioned} correspondence between action behavior.
  Actions do not need to have to correspond across states.
  {
\begin{theorem}
\theoremlabel{th1}
Given a discrete finite MDP, $\hat{M}$ which is a refinement of $\hat{M}$ (Definition 3.1) then both MDPs induce the same optimal value function, \ie $\forall_s V^{*}_{\hat{M}}(s) = V^{*}_{M}(s)$.
\end{theorem}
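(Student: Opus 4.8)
The plan is to show that the two MDPs share \emph{the same} Bellman optimality operator, after which the conclusion follows from uniqueness of that operator's fixed point. For an arbitrary bounded value function $V : \mathbf{S} \to \mathbb{R}$, define the one-step-lookahead (Bellman optimality) operators
\[
(T_M V)(s) = \max_{a \in \mathbf{A}} \sum_{s',r} p(s',r \mid s,a)\bigl[r + \gamma V(s')\bigr],
\]
\[
(T_{\hat{M}} V)(s) = \max_{\hat{a} \in \mathbf{\hat{A}}} \sum_{s',r} \hat{p}(s',r \mid s,\hat{a})\bigl[r + \gamma V(s')\bigr].
\]
The heart of the argument is the claim that $(T_M V)(s) = (T_{\hat{M}} V)(s)$ for every $V$ and every $s$. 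To prove this I would fix $s$ and compare the two \emph{sets} of achievable backup values, namely $\{\sum_{s',r} p(s',r\mid s,a)[r+\gamma V(s')] : a \in \mathbf{A}\}$ and the analogous set over $\hat{a} \in \mathbf{\hat{A}}$.

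By the first clause of Definition 3.1, every $\hat{a} \in \mathbf{\hat{A}}$ has, at this particular state $s$, a companion $a \in \mathbf{A}$ with $\hat{p}(s',r\mid s,\hat{a}) = p(s',r\mid s,a)$ for all $s',r$; since each backup value depends on the action only through this outgoing transition distribution, the two actions yield identical backups, so the $\hat{\mathbf{A}}$-set is contained in the $\mathbf{A}$-set. The second clause of Definition 3.1 supplies the reverse containment by the symmetric argument. Hence the two sets coincide, and taking maxima gives $(T_M V)(s) = (T_{\hat{M}} V)(s)$. As $s$ and $V$ were arbitrary, $T_M = T_{\hat{M}}$ as operators.

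Finally, for a discrete finite MDP with $\gamma < 1$ each operator is a $\gamma$-contraction in the sup norm and therefore has a unique fixed point; by the Bellman optimality equations those fixed points are exactly $V^*_M$ and $V^*_{\hat{M}}$. Since the operators are identical, their fixed points agree, yielding $V^*_{\hat{M}}(s) = V^*_M(s)$ for all $s$. The one point demanding care --- and the only place the state-conditioned nature of the refinement enters --- is that the action matched to a given $\hat{a}$ (or $a$) may vary from state to state; this is harmless precisely because each backup at $s$ involves only the transition distribution out of $s$, so a per-state matching of actions is exactly what the set-equality step requires. I would therefore flag the attainment of the maxima (finiteness) and the $\gamma < 1$ contraction property as the standing assumptions that make the concluding fixed-point step rigorous, rather than anything in the set-equality argument itself.
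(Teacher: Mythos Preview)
Your argument is correct but follows a genuinely different route from the paper. The paper first proves a policy-level correspondence (Lemma~\ref{equal_policy}): by grouping actions into state-dependent equivalence classes (``fundamental actions''), it constructs for every policy in one MDP a value-equivalent policy in the other, and then derives the theorem by a short contradiction argument on optimality. You instead work directly at the operator level: using the two clauses of the refinement definition to show that the sets of one-step backup values at each state coincide, hence $T_M = T_{\hat M}$, and conclude via uniqueness of the contraction fixed point.

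Your approach is shorter and requires no explicit policy construction or equivalence-class bookkeeping; the set-equality step is exactly the right distillation of Definition~3.1. The paper's route, while heavier, yields the strictly stronger intermediate statement that \emph{every} (not just the optimal) achievable value function in one MDP is achievable in the other, which may be of independent interest. Your flagging of the $\gamma < 1$ and finite-action-set assumptions is appropriate; the paper leaves those implicit.
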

}
We prove this by showing that optimal policies under both MDPs
induce the same value function.

\begin{lemma}
\label{equal_policy}
  For any policy $\pi_M$ on $M$, there exists a policy $\pi_{\hat{M}}$ on $\hat{M}$ such that $V^{\pi_{\hat{M}}}_{\hat{M}}(s) = V^{\pi_M}_{M}(s)$, $\forall s$, and for any policy $\pi_{\hat{M}}$ on $\hat{M}$ there exists a policy $\pi_{M}$ on $M$ such that $V^{\pi_{\hat{M}}}_{\hat{M}}(s) = V^{\pi_{M}}_{M}(s)$ $\forall s$.
\end{lemma}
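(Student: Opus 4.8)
The plan is to reduce the whole statement to a single observation: the value of a fixed (possibly stochastic) policy depends on that policy only through the \emph{state-conditioned transition--reward kernel} it induces. For a policy $\pi$ on $M$ define
\[
P^{\pi}_{M}(s',r \mid s) \;=\; \sum_{a \in \mathbf{A}} \pi(a \mid s)\, p(s',r \mid s,a),
\]
and define $P^{\hat\pi}_{\hat M}$ analogously with $\hat p$ and $\mathbf{\hat{A}}$. Each value function is the unique fixed point of the policy-evaluation Bellman equation $V(s) = \sum_{s',r} P(s',r \mid s)\,[\,r + \gamma V(s')\,]$, whose associated operator is a $\gamma$-contraction on the finite state space (for $\gamma < 1$) and hence has a unique solution determined entirely by $P$. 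Consequently, a policy on $\hat M$ and a policy on $M$ that induce the \emph{same} kernel $P(\cdot \mid s)$ at every state $s$ must have identical value functions, and the lemma reduces to showing that every kernel realizable on one MDP is realizable on the other. I regard establishing this reduction cleanly as the conceptual crux; everything after it is construction.

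For the first direction, fix $\pi_M$ on $M$. Refinement condition (b) guarantees that for each state $s$ and each $a \in \mathbf{A}$ there is at least one $\hat a \in \mathbf{\hat{A}}$ with $\hat p(\cdot \mid s,\hat a) = p(\cdot \mid s,a)$; since the MDP is finite I would fix one such choice per state and call it $g_s(a) \in \mathbf{\hat{A}}$. I then push the policy mass forward through $g_s$,
\[
\pi_{\hat M}(\hat a \mid s) \;=\; \sum_{a \,:\, g_s(a) = \hat a} \pi_M(a \mid s).
\]
Substituting into the induced kernel and using $\hat p(\cdot \mid s, g_s(a)) = p(\cdot \mid s, a)$ yields
\[
P^{\pi_{\hat M}}_{\hat M}(s',r\mid s) = \sum_{a} \pi_M(a\mid s)\,\hat p(s',r\mid s,g_s(a)) = \sum_{a}\pi_M(a\mid s)\,p(s',r\mid s,a) = P^{\pi_M}_{M}(s',r\mid s),
\]
so the kernels coincide at every $s$ and the value functions are equal by the reduction above.

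The second direction is entirely symmetric: given $\pi_{\hat M}$ on $\hat M$, refinement condition (a) supplies for each $s$ and each $\hat a$ an action $h_s(\hat a) \in \mathbf{A}$ with $p(\cdot \mid s, h_s(\hat a)) = \hat p(\cdot \mid s,\hat a)$, and defining $\pi_M(a \mid s) = \sum_{\hat a \,:\, h_s(\hat a)=a}\pi_{\hat M}(\hat a \mid s)$ makes the induced kernels agree by the same computation. I expect the remaining difficulties to be bookkeeping rather than mathematics: one must check that each pushforward is a genuine distribution over the \emph{new} action set (non-negativity is immediate, and the masses sum to one because $g_s$ and $h_s$ are functions defined on all of $\mathbf{A}$, resp.\ $\mathbf{\hat{A}}$), and one must respect that the matching is allowed to be \emph{state-dependent}, exactly as emphasized after Definition~3.1 -- the same original action may be realized by different refined actions in different states, so $g_s$ and $h_s$ genuinely vary with $s$. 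This bidirectional correspondence between realizable value functions is precisely the ingredient needed to conclude the theorem stated above, by applying it to the optimal policies of each MDP and using that $V^{*}$ is the pointwise supremum of $V^{\pi}$ over a policy family the two MDPs share.
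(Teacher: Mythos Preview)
Your proof is correct and somewhat cleaner than the paper's. The paper introduces \emph{fundamental actions} --- state-conditioned equivalence classes of actions with identical transition/reward distributions --- and, given $\pi_{\hat M}$, constructs $\pi_M$ by spreading the total $\hat M$-mass on each fundamental class uniformly across the $\mathbf{A}$-actions in that class; it then verifies equality of value functions by algebraically manipulating the Bellman recursion through this class decomposition. You instead isolate the key observation up front --- that $V^{\pi}$ depends on $\pi$ only through the induced kernel $P^{\pi}(s',r\mid s)$, by uniqueness of the Bellman fixed point --- and then match kernels by a pushforward along a state-dependent choice function $g_s$ (resp.\ $h_s$), concentrating mass on a single representative rather than spreading it. Both constructions work; yours avoids the heavy notation around $\alpha,\hat\alpha,\beta,\hat\beta$ and makes the role of fixed-point uniqueness explicit (the paper uses it too, but tacitly in the final ``this gives the definition of $V^{\pi_{\hat M}}_{\hat M}$'' step). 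The paper's uniform-spreading construction has the mild aesthetic advantage of being canonical (no arbitrary choice of representative), but your argument is shorter and makes the logical structure --- match kernels, then invoke contraction --- more transparent.
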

{ For this lemma we introduce the notion of \textit{fundamental
actions}, which are actions which correspond to sets of actions which have the
same state and reward transition distributions in a given state. We utilize the
equivalence of fundamental actions between MDPs to construct a policy in the
new MDP which induces the same value function as a given policy in the original
MDP.} We provide proofs for \theoremref{th1} and Lemma \ref{equal_policy} in
\secref{proofs}.

\subsection{Gridworld Case Study}
\seclabel{expval}
We validate these results in a tabular grid world setting. In particular, we
measure the error in learned value functions and the induced behavior, when
conducting Q-learning with datasets with different qualities of intervening
actions. 
The agent needs to navigate from the top left of a $6 \times 6$ grid to the
bottom right with sparse reward. We generate data from a fixed, sub-optimal
policy to evaluate all methods in an offline fashion (additional details in
\secref{data}). We generate $20$K episodes with this policy, and obtain value
functions using Q-learning under the following 4 choices for the intervening
actions: 
\textbf{(1)} Ground truth actions ($V_\text{gt}$), \textbf{(2)} One
action ($V_\text{one-act}$, ammounts to TD(0) policy evaluation), \textbf{(3)}
4$\times$ refinement of original action space ($V_\text{4$\times$-gt}$). We
modify the data so that each sample for a particular action in the original
action space is randomly mapped to one of 4 actions in the augmented space.
\textbf{(4)} Obfuscated actions ($V_\text{impure-act}$). Original action with
probability $0.5$, and a random action with probability $0.5$.

\figref{tabularexp} shows the learned value functions under these different
action labels, and reports the MSE from the true value function, along with induced behavior. In line with our
expectations, 
$V_\text{4$\times$-gt}$ which uses a refinement of the actions is able to
recover the optimal value function. 
$V_\text{one-act}$ fails to recover the optimal value
function, and recovers the value corresponding to the policy that generated the
data. $V_\text{impure-act}$, under noise in action labels (non-refinement) also fails to recover the optimal value function.
Furthermore, the behavior implied by $V_\text{impure-act}$ and
$V_\text{one-act}$ is sub-optimal. 
We also analyze
the effect of the action impurity on learned values and implied
behavior. Behavior becomes increasingly inaccurate as action impurity 
increases. More details in \secref{grid-world-mse}.

\section{Latent Action Q-Learning}
\seclabel{approach}
Our analysis in \secref{theory} motivates the design of our
approach for learning behaviors from state-only experience. Our proposed
approach decouples learning into three steps: mining \textit{latent} actions
from state-only trajectories, using these latent actions for Q-learning to
obtain value functions, and learning a policy to act according to the learned
value function. As per our analysis, if learned latent actions are a {\it
state-conditioned refinement} of the original actions, Q-learning will result
in good value functions, that will lead to good behaviors. Refer to Algorithm 1
for details.

\begin{figure}
\centering
\includegraphics[width=\linewidth]{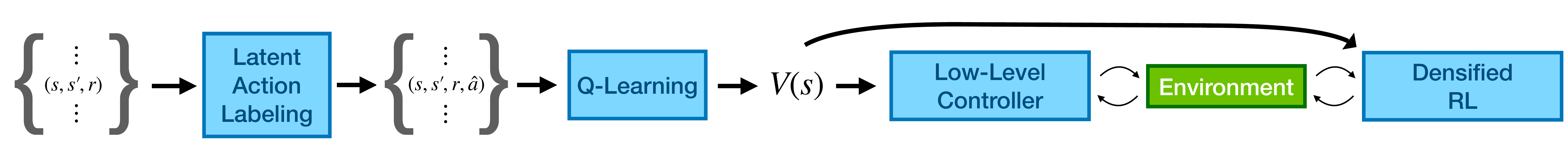}
\vspace{-5mm}
\caption{\textbf{Approach Overview.} Our proposed approach Latent Action
Q-Learning (LAQ) starts with a dataset of $(s,s',r)$ triples. Using the latent
action learning process, each sample is assigned a latent action $\hat{a}$.
Q-learning on the dataset of quadruples produces a value function, $V(s)$.
Behaviors are derived from the value function through densified RL, or
by guiding low-level controllers.}
\figlabel{overview}
\end{figure}

\subsection{Latent Actions from Future Prediction}
\seclabel{action}
Given a dataset $\mathbf{D}$ of observations streams $\ldots, o_{t}, o_{t+1},
\ldots$, the goal in this step is to learn \textit{latent} actions that are a
refinement of the actual actions that the agent executed \footnote{We use the terms state ($s_t$) and observation ($o_t$) interchangeably.}. We learn these latent
actions through future prediction. We train a future prediction model $f_\theta$, that
maps the observation $o_t$ at time $t$, and a latent action $\hat{a}$ (from a set
$\mathbf{\hat{A}}$ of discrete latent actions) to the observation $o_{t+1}$ at time
$t+1$, \ie $f_\theta(o_t, \hat{a})$. $f$ is trained to minimize a loss $l$ between the
prediction $f_\theta(o_t, \hat{a})$ and the ground truth observation $o_{t+1}$. $\hat{a}$ is
treated as a latent variable during learning. Consequently, $f_\theta$ is trained
using a form of expectation maximization~\citep{bishop2006pattern}. Each training sample
$(o_t, o_{t+1})$ is assigned to the action that leads to the lowest loss under
the current forward model. The function $f_\theta$ is optimized to minimize the
loss under the current latent action assignment. More formally, the loss
for each sample $(o_t, o_{t+1})$ is: $L(o_t, o_{t+1}) \coloneqq \min_{\hat{a} \in \mathbf{\hat{A}}} l
  \left(f_\theta(o_t, \hat{a}), o_{t+1} \right)$. We minimize $\sum_{(o_t,
  o_{t+1}) \in \mathbf{D}} L(o_t, o_{t+1})$ over the dataset to learn 
  $f_\theta$.

Latent action $\hat{a}_t$ for observation pairs $(o_{t}, o_{t+1})$ are obtained
from the learned function $f_\theta$ as: $\argmin_{\hat{a} \in
\mathbf{\hat{A}}} l \left(f_\theta(o_t, \hat{a}), o_{t+1} \right)$.
Choice of the function $f_\theta$ and loss $l$ vary depending on the problem.
We use L2 loss in the observation space (low-dimensional 
states, or images).

\subsection{Q-learning with Latent Actions}
\seclabel{q}
Latent actions mined from \secref{action} allow us to complete the given $(o_t,
o_{t+1}, r_t)$ tuples into $(o_t, \hat{a}_t, o_{t+1}, r_{t})$ quadruples for
use in Q-learning~\cite{watkins1989learning}. As our actions are discrete we
can easily adopt any of the existing Q-learning methods for discrete action
spaces (\eg \cite{mnih2013playing}). Though, we note that this Q-learning 
still needs to be done in an \textit{offline} manner from pre-recorded
state-only experience. While we adopt the most basic Q-learning in our
experiments, more sophisticated versions that are designed for offline
Q-learning (\eg \cite{KumarZTL20, fujimoto2019off}) can be directly adopted,
and should improve performance further. Value functions are obtained from the Q-functions as 
$V(s) = \max_{\hat{a} \in \mathbf{\hat{A}}} Q(s,\hat{a})$.

\subsection{Behaviors from Value Functions}
\seclabel{behavior}
Given a value function, our next goal is to derive behaviors from the learned
value function. In general, this requires access to the transition function of
the underlying MDP. Depending on what assumptions we make, this will be done in
the following two ways.

\textbf{Densified Reinforcement Learning.}
Learning a value function from state-only experience can be extremely valuable
when a dense reward function for the underlying task is not readily available.
In this case, using the learned value function can densify sparse reward
functions, making previously intractable RL problems solvable. Specifically, we
use the value function to create a \textit{potential-based} shaping function
$F(s,s') = V(s') - V(s)$, based on \cite{ng1999policy}, and construct an
augmented reward function $r'(s,a,s') = r(s,a,s') + F(s,s')$.  Our experiments
show that using this densified reward function speeds up behavior acquisition.

\textbf{Domain Specific Low-level Controllers.} 
In more specific scenarios, it
may be possible to employ hand designed low-level controllers in conjunction
with a model that can predict the next state $s'$ on executing any of low-level
controllers. In such a situation, behavior can directly be obtained by picking
the low-level controller that conveys the agent to the state $s'$ that has the
highest value under the learned $V(s)$. Such a technique was used
by~\cite{chang2020semantic}. We show results in their setup.

\section{Experiments}
\seclabel{expts}
We design experiments to assess the quality of value functions learned by LAQ
from undirected state-only experience.  We do this in two ways. First, we measure
the extent to which value functions learned with LAQ without ground truth
information agree with value functions learned with Q-learning with ground
truth action information. This provides a direct quality measure and allows us
to compare different ways of arriving at the value function: other methods in
the literature (D3G \citep{edwards2020estimating}), and simpler
alternatives of arriving at latent actions. Our second evaluation measures the
effectiveness of LAQ-learned value functions for deriving effective behavior in
different settings: when using it as a dense reward, when using it to guide
low-level controllers, and when transferring behavior across embodiments.
Where possible, we compare to behavior cloning (BC) with {\it privileged} ground
truth actions. BC with ground truth actions serves as an upper bound on the
performance of state-only imitation learning methods (BCO from \cite{TorabiWS18},
ILPO from \cite{edwards2019imitating}, \etc) and allows us to indirectly compare
with these methods.

\begin{figure}
    \centering
    \includegraphics[width=0.95\textwidth]{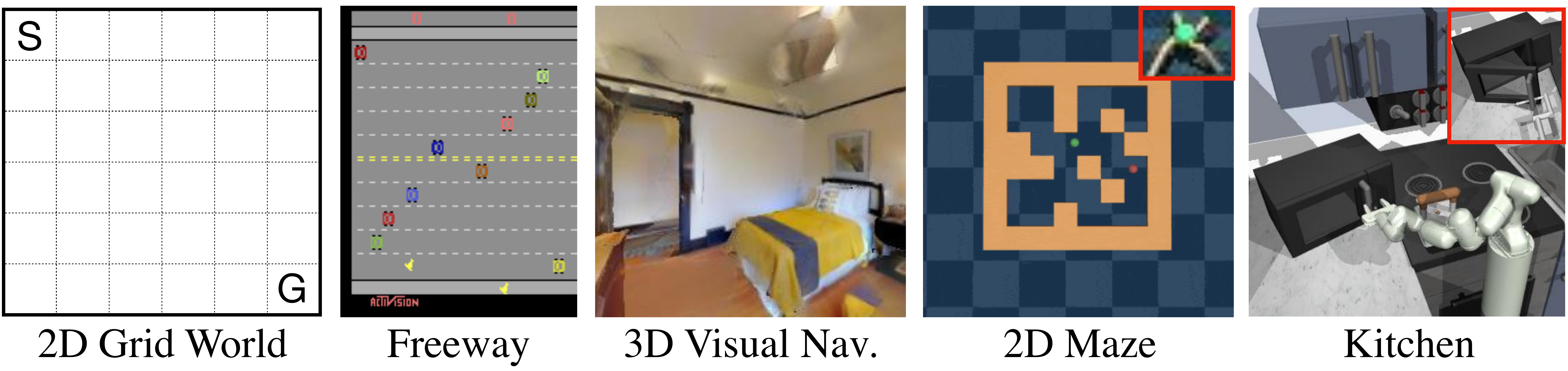}
    \caption{We experiment with five environments: 2D Grid World, Freeway (Atari), 
    3D Visual Navigation, Maze2D (2D Continuous Control), and FrankaKitchen. Top right corner of Maze2D and FrankaKitchen, shows the embodiments for cross-embodiment transfer (ant and hook, respectively).}
    \figlabel{sup_figure_envs}
\end{figure}

\textbf{Test Environments.} Our experiments are conducted in five varied
environments: the grid world environment from \secref{theory}, the Atari game
Freeway from \cite{bellemare13arcade}, 3D visual navigation in realistic
environments from \cite{chang2020semantic, habitat}, and two continuous control
tasks from \cite{fu2020d4rl}'s D4RL: Maze2D (2D continuous control navigation),
and FrankaKitchen (dexterous manipulation in a kitchen).  For Maze2D and
FrankaKitchen environments, we also consider {\it embodiment transfer}, where
we seek to learn policies for an ant and a hook respectively from the
observation-only experience of a point mass and the Franka arm.  Together,
these environments test our approach on different factors that make policy
learning hard: continuous control, high-dimensional observations and control,
complex real world appearance, 3D geometric reasoning, and learning across
embodiments. Environments are visualized in \figref{sup_figure_envs}, 
more details about the environments are provided in~\secref{env-details}.

\textbf{Experimental Setup}
For each setting, we work with a pre-collected dataset of experience in the
form of state, next state and reward triplets, $(o_t, o_{t+1}, r_t)$.  We use
our latent-variable forward model (\secref{action}) and label triplets
with latent actions to obtain quadruples $(o_t, \hat{a}_t, o_{t+1}, r)$. We
perform Q-learning on these quadruples to obtain value functions $V(s)$, 
which are used to acquire behaviors either through densified RL by
interacting with the environment, or through use of
domains-specific low-level controllers. We use the ACME
codebase~\citep{hoffman2020acme} for experiments.

\textbf{Latent Action Quality.}
In line with the theory developed in \secref{theory}, we want to establish how well our method learns a refinement of the underlying action space. To assess this, we study the {\it state-conditioned purity} of the partition induced by the learned latent actions (see definition in \secref{purity}). Overall, our method is effective at finding refinement of the original action space. It achieves higher state-conditioned purity than a single action and clustering. In high-dimensional image observation settings, it surpasses baselines by a wide margin. More details in \secref{purity}.

\renewcommand{\arraystretch}{1.1}
\begin{table*}
\centering
\small 
\setlength{\tabcolsep}{6pt}
\caption{We report Spearman's correlation coefficients for value functions learned using various methods with DQN, against a value function learned offline using ground-truth actions (DQN for discrete action environments, and DDPG for continuous action environments). The {\it Ground Truth Actions} column shows Spearman's correlation coefficients between two different runs of offline learning with ground-truth actions. See \secref{q-quality}. Details on model selection in \secref{vis_value_fn}. }
\tablelabel{spearmans}
\resizebox{\textwidth}{!}{
\begin{tabular}{lcccccc}
\toprule
\textbf{Environment}           & \textbf{D3G}                     & \textbf{Single Action} & \textbf{Clustering} & \textbf{Clustering (Diff)} & \textbf{Latent Actions} & \textbf{Ground Truth Actions}\\
\midrule
2D Grid World         & 0.959                   & 0.093 & 0.430 & \textbf{1.000} & 0.985 & 1.000\\
Freeway               & -- (image input) & 0.886 & 0.945 & 0.902 & \textbf{0.961} & 0.970\\
3D Visual Navigation  & -- (image input) & 0.641 & 0.722 & 0.827 & \textbf{0.927} & 0.991\\
\arrayrulecolor{black!30}\midrule
2D Continuous Control & 0.673                   & 0.673 & 0.613 & 0.490 & \textbf{0.844} & 0.851\\
Kitchen Manipulation  & 0.854                   & 0.858 & 0.818 & 0.815 & \textbf{0.905} & 0.901 \\
\arrayrulecolor{black} \bottomrule
\end{tabular}}
\end{table*}

\subsection{Quality of Learned Value Functions}
\seclabel{q-quality}
We evaluate the quality of the value functions learned through LAQ. 
We use as reference the value
function $V_{\text{gt-act}}$, obtained through \textit{offline} Q-learning
(DDPG for continuous action cases) with
true ground truth actions \ie $(o_t, a_t, o_{t+1}, r_t)$.\footnote{
Offline DDPG in the FrankaKitchen environment was unstable. 
To obtain a reference value function, we manually define a value function based
on the distance between the end-effector and the microwave handle (lower 
better), and the angle of the microwave door (higher better). We use this 
as the reference value function.}
For downstream
decision making, we only care about the relative ordering of state values.
Thus, we measure the Spearman's rank correlation coefficient between the
different value functions. \tableref{spearmans} reports the Spearman's
coefficients of value functions obtained using different action labels: single
action, clustering, latent actions (ours), and ground truth actions.
We also report Spearman's correlations of value functions produced using
D3G \citep{edwards2020estimating}. In all settings we do Q-learning over the
top 8 dominant actions, except for Freeway, where using the top three actions
stabilized training.
Our method out performs all baselines in settings with
high-dimensional image observations (3D Visual Navigation, Freeway). In state
based settings, where clustering state differences is a helpful inductive bias,
method is still on-par with, or superior to clustering state
differences and even D3G, which predicts state differences.

\subsection{Using Value Functions for Downstream Tasks}
\begin{figure*}
\centering
\insertWL{0.32}{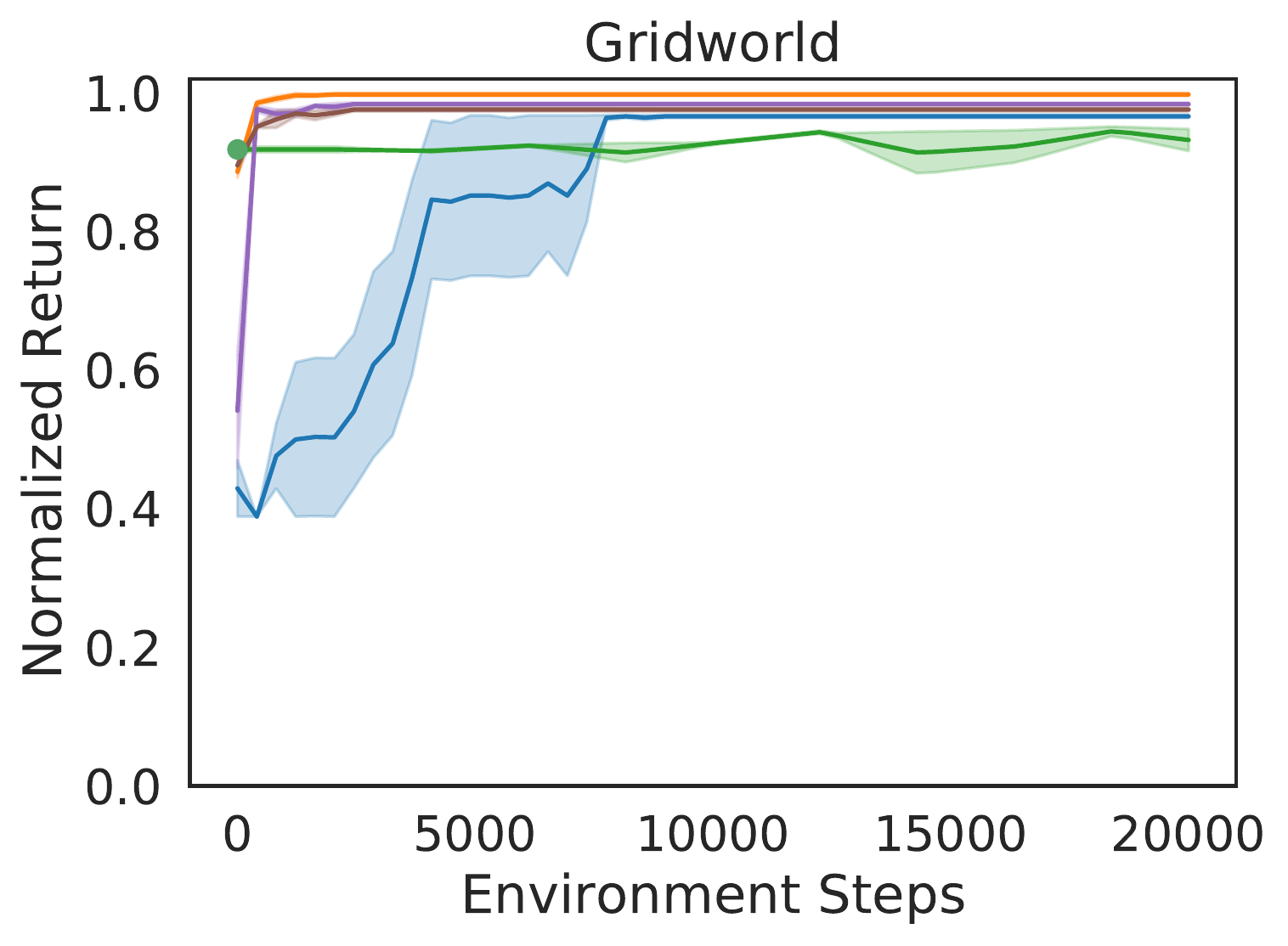} \hfill 
\insertWL{0.32}{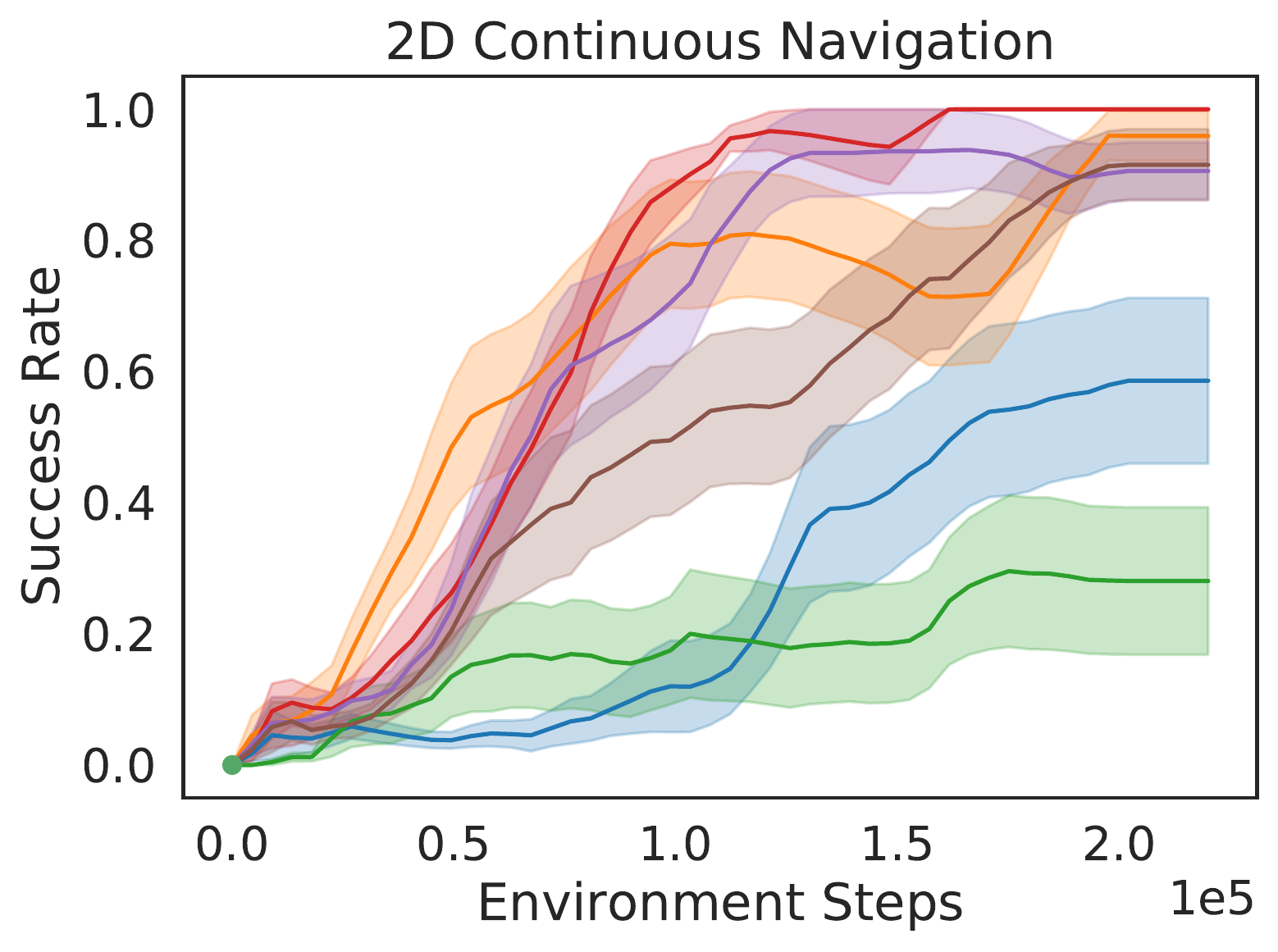} \hfill
\insertWL{0.32}{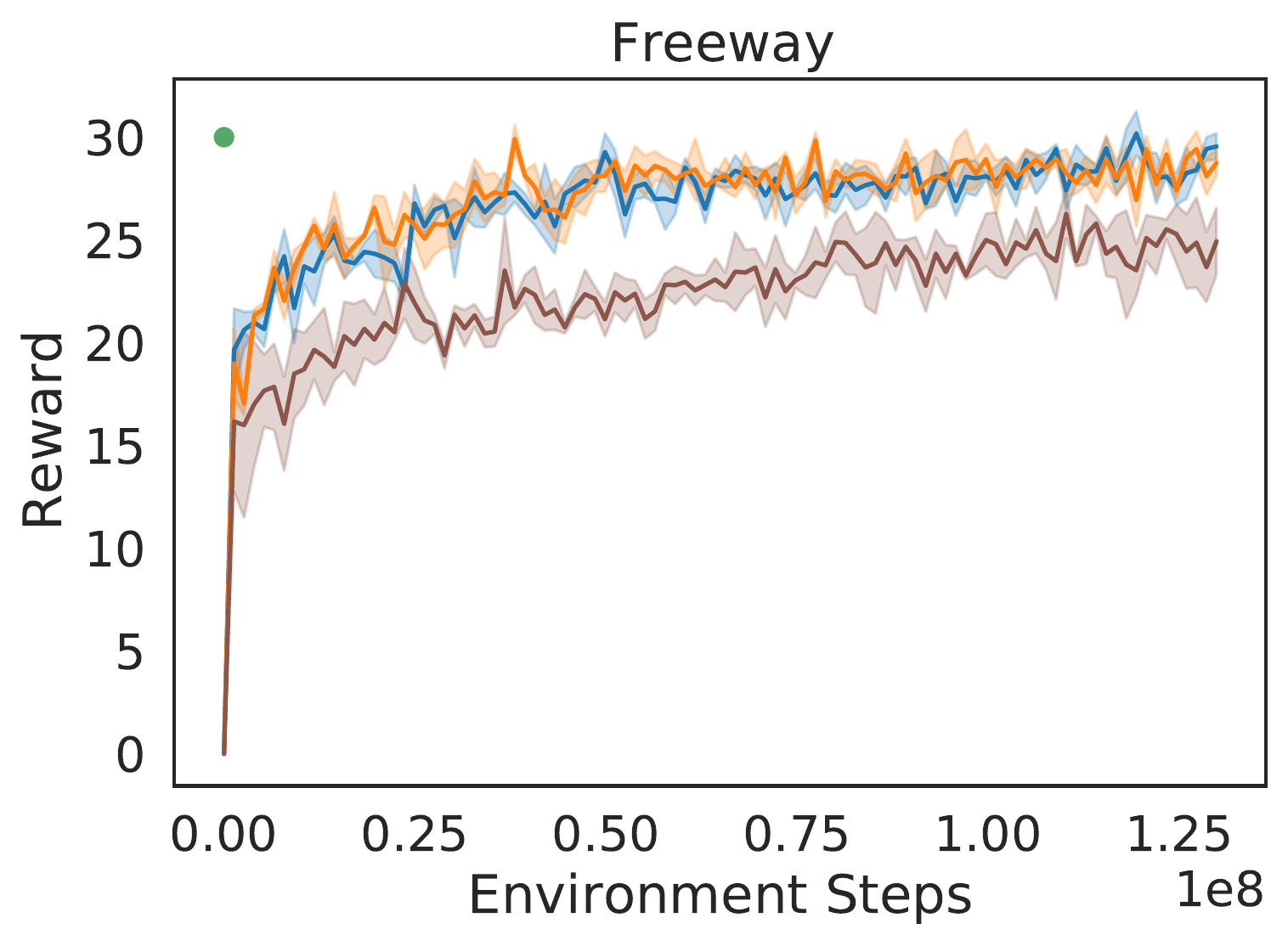} \\
\insertWL{1}{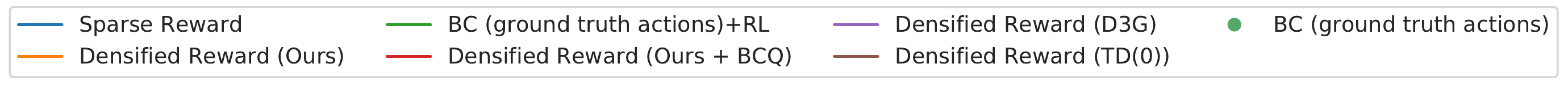}
\vspace{-6mm}
\caption{We show learning curves for acquiring
  behavior using learned value functions. We compare densified RL
  (\secref{behavior}) with sparse RL and BC/BC+RL. See \secref{rl_exps} for
  more details. Results are averaged over 5 seeds and show $\pm$
  standard error. 
}
\figlabel{behavior}
\vspace{-4mm}
\end{figure*}

\seclabel{rl_exps}
Our next experiments test the utility of LAQ-learned value functions 
for acquiring goal-driven behavior. We first describe the 3  
settings that we use to assess this, and then summarize our takeaways.

\begin{itemize}[nolistsep,leftmargin=*]
\item \textbf{Using value functions as dense reward functions}.  We combine
sparse task reward with the learned value function as a potential function (\secref{behavior}). We scale up the sparse task rewards by a factor
of 5 so that behavior is dominated by the task reward once 
policy starts solving the task. \figref{behavior} measures the learning sample
efficiency. We compare to only using the sparse reward, behavior
cloning (BC) with ground truth actions, and BC followed by spare reward RL.
\item \textbf{Using value functions to learn behavior of an agent with a
different embodiment.} 
Decoupling the learning of value function and the policy
has the advantage that learned value functions can be used to improve learning
across embodiment.  We demonstrate this, we keep the same task, but change the
embodiment of the agent in Maze2D and FrankaKitchen environments. {Note that we do 
not assume access to ground truth actions in these experiments either.} For Maze2D,
the point-mass is replaced with a 8-DOF quadrupedal ant. For FrankaKitchen, the
Franka arm is replaced with a position-controlled hook. 
We may need to define how we query the value function when the embodiment (and
the underlying state space) changes.
For the ant in Maze2D, the location (with $0$ velocity) of the ant body is used to
query the value function learned with the point-mass. For the hook in
FrankaKitchen, the value function is able to transfer directly as both settings
observe end-effector position and environment state. 
We report results in \figref{transfer}. 
\item \textbf{Using value functions to guide low-level controllers.}
Learned value functions also have the advantage that they can be used directly
at test time to guide the behavior of low-level controllers.  We do this
experiment in context of 3D visual navigation in a scan of a real building and
use the \textit{branching environment} from~\cite{chang2020semantic}. We follow
their setup and replace their value functions with ones learned using LAQ in
their hierarchical policy, and compare the efficiency of behavior encoded by
the different value functions. 
\end{itemize}

\textbf{LAQ value functions speed up downstream learning.}
Learning plots in \figref{behavior} show that LAQ-learned value functions speed
up learning in the different settings over learning simply with sparse rewards
(orange line \vs blue line).  In all settings except Freeway, our method not
only learns more quickly than sparse reward, but converges to a higher mean
performance.

\textbf{LAQ discovers stronger behavior than imitation learning when faced with
undirected experience.} An advantage of LAQ over other imitation-learning based
methods such as BCO~\citep{TorabiWS18} and ILPO~\citep{edwards2019imitating} is
LAQ's ability to learn from sub-optimal or undirected experience. To showcase
this, we compare the performance of LAQ with behavior cloning (BC) with ground
truth actions. Since BCO and ILPO recover ground truth actions to perform
behavior cloning (BC), BC with ground truth actions serves as an upper bound on
the performance of all methods in this class.
Learning plots in \figref{behavior} shows the effectiveness of LAQ over BC
and BC followed by fine-tuning with sparse rewards for environments where the
experience is undirected (Maze2D, and GridWorld). For Freeway, the experience is
fairly goal-directed, thus BC already works well. A similar trend
can be seen in the higher Spearman's coefficient for LAQ \vs \V{one-act} in
\tableref{spearmans}. LAQ discovers stronger
behavior than imitation learning when faced with undirected data.  

\textbf{LAQ is compatible with other advances in batch RL.}
Although LAQ uses the most basic Q-Learning as our offline value learning
method, it is compatible with recent more advanced offline RL value-learning
methods (such as CQL~\citep{KumarZTL20} and BCQ~\citep{fujimoto2019off}). We
validate by simply swaping to using
(discrete) BCQ with our latent actions. {\figsref{behavior}{transfer} show that LAQ with
BCQ is the strongest method, outperforming ours with DQN, and D3G, on Maze2D and embodiment transfer environments. }
Analysis of Spearman's correlations in \tableref{spearmans_bcq} shows the same
trend as before with latent actions: better than single actions,
and clustering variants. Note also that use of BCQ leads to value functions
with better Spearman's correlations than DQN.

\renewcommand{\arraystretch}{1.1}
\begin{table*}
\centering
\setlength{\tabcolsep}{6pt}
\caption{We report Spearman's correlation coefficients for value functions learned using either DQN or BCQ, against a value function learned offline using BCQ with ground-truth actions. The {\it Ground Truth Actions} column shows Spearman's correlation coefficients between two different runs of offline learning with ground-truth actions. See \secref{q-quality}.}
\tablelabel{spearmans_bcq}
\resizebox{\textwidth}{!}{
\begin{tabular}{lccccc}
\toprule
\textbf{Environment}  & \textbf{Single Action} & \textbf{Clustering} & \textbf{Clustering (Diff)} & \textbf{Latent Actions} & \textbf{Ground Truth Actions}\\
\midrule
2D Continuous Control (DQN) & 0.664 & 0.431 & 0.312 & \textbf{0.807} & 0.765\\ %
2D Continuous Control (BCQ) & 0.710 & 0.876 & 0.719 & \textbf{0.927} & 0.990 \\ %
\arrayrulecolor{black} \bottomrule
\end{tabular}}
\end{table*}

\textbf{LAQ value functions allow transfer across embodiments.}
\figref{transfer} shows learning plots of agents trained with
cross-embodiment value functions. LAQ-densified rewards functions,
speed-up learning and consistently guide to higher reward
solutions than sparse task rewards, or D3G.

\textbf{LAQ compares favorably to D3G.}
We compare LAQ and
D3G (a competing state-only method) in different ways. 
D3G relies on generating potential 
future states. This is particularly challenging for 
image observations, and D3G doesn't show results with 
image observations. In contrast, LAQ maps state transitions 
to discrete actions, and hence works with image 
observations as our experiments show. Even in scenarios
with low-dimensional state inputs, LAQ learns better
value functions than D3G, as evidences by Spearman's 
correlations in \tableref{spearmans}, and learning plots in
\figref{behavior} and \figref{transfer}.

\begin{figure}
\centering
\begin{minipage}{0.6\textwidth}
\vspace{-3mm}
\centering
 \insertH{0.56}{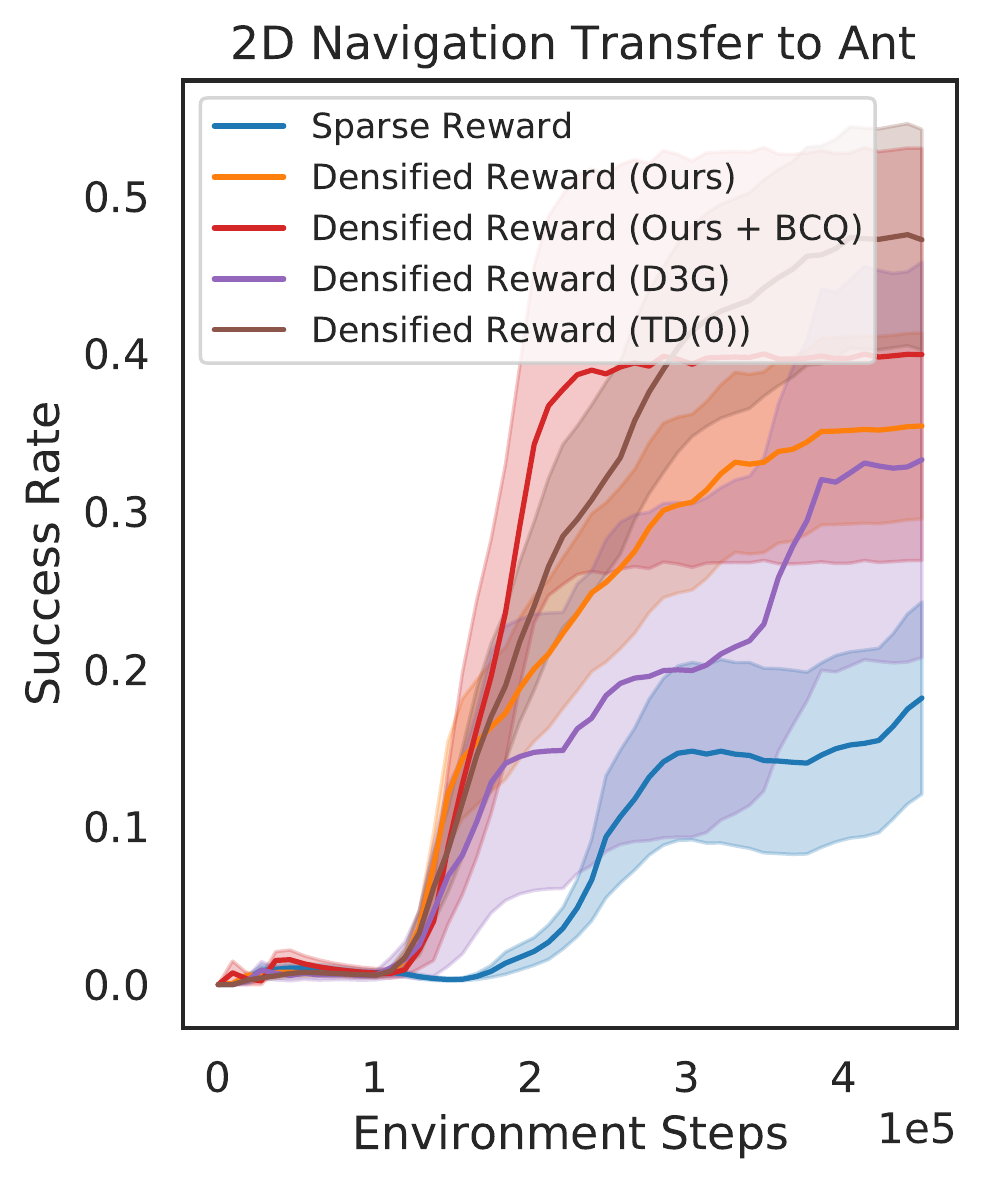} 
 \insertH{0.56}{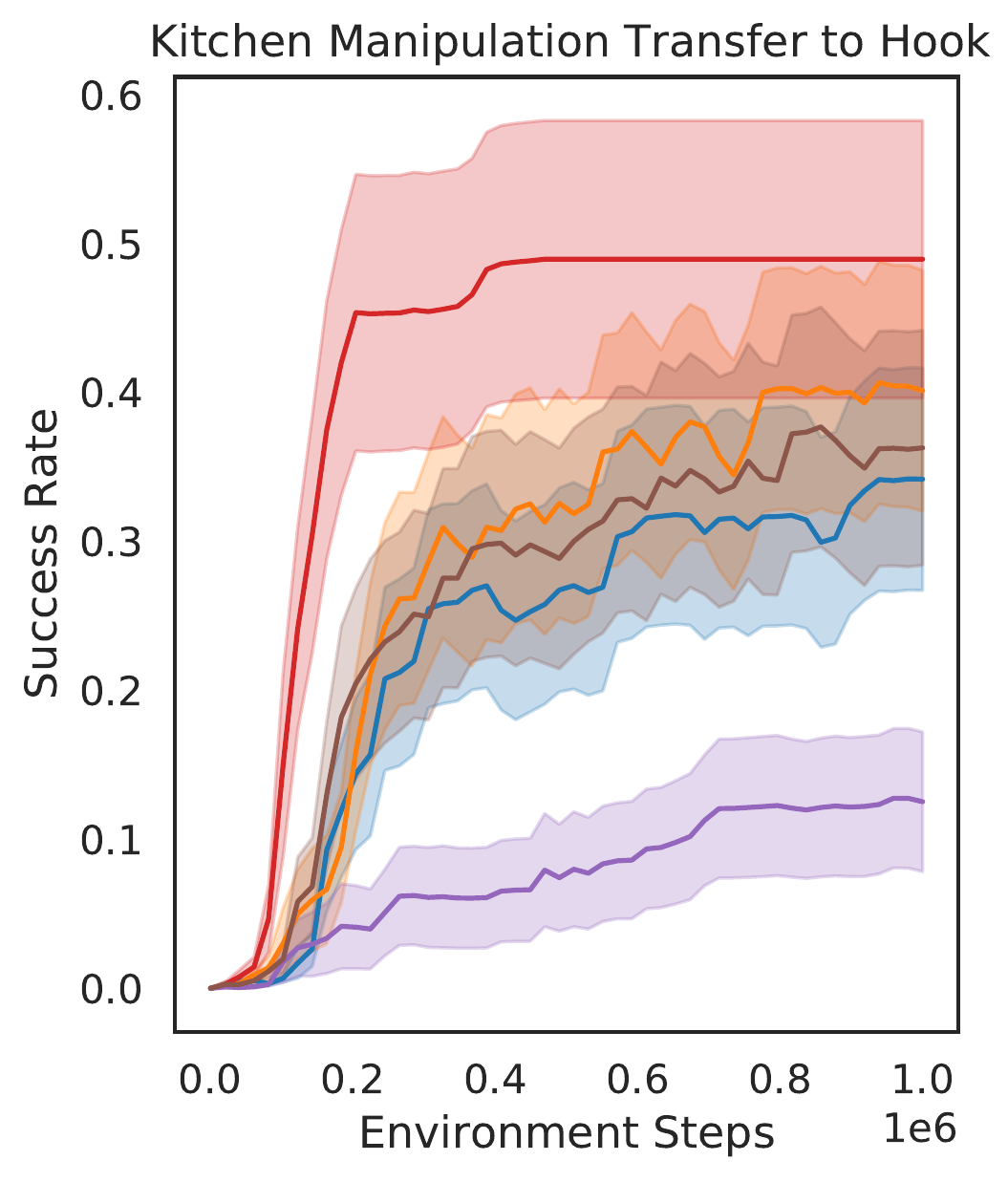}
\vspace{-1.6mm}
\caption{Behavior acquisition across embodiments. Results averaged over 50 seeds and show $\pm$ standard error.}
\figlabel{transfer}
\end{minipage}
\hspace{1mm}
\begin{minipage}{0.36\textwidth}
\setlength{\tabcolsep}{2pt}
\vspace{-1em}
\insertWL{1.00}{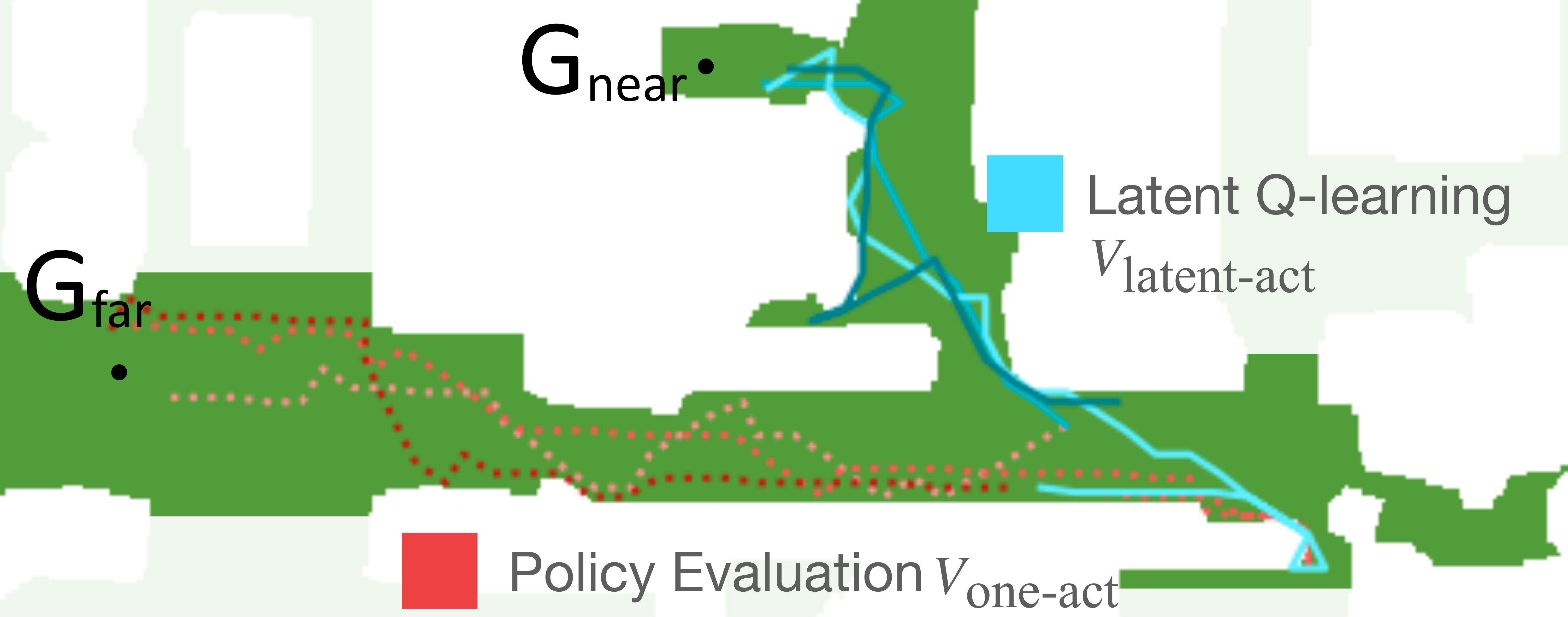}
\resizebox{1.0\linewidth}{!}{
\begin{tabular}{lcc}
\toprule
    & \textbf{Interaction} & \\ 
    & \textbf{Samples} & \textbf{SPL}\\
\midrule
$V_\text{one-act}$~\citep{chang2020semantic} & 0 & 0.53\\
$V_\text{cluster-act}$  & 0 & 0.57\\
$V_\text{latent-act}$  & 0 & 0.82\\
$V_\text{inverse-act}$~\citep{chang2020semantic} & 40K & 0.95\\
\bottomrule
\end{tabular}}
\caption{Visualization of trajectories and SPL numbers in the 3D visual navigation environment.}
\figlabel{3d-nav}
\end{minipage}

\end{figure}

\textbf{LAQ value functions can guide low-level controllers for zero-shot control:}
We report the SPL for 3D navigation using value functions combined with
low-level controllers in \figref{3d-nav}. We report the efficiency of behavior
induced by LAQ learned value functions as measured by the SPL metric from \cite{anderson2018evaluation} (higher is
better).
The branching environment has two goal states, one optimal and one sub-optimal. The demonstrations there-in were specifically
designed to emphasize the utility of knowing the intervening actions. Simple
policy evaluation leads sub-optimal behavior (SPL of 0.53) and past work relied
on using an inverse model to label actions~\citep{chang2020semantic} to derive
better behavior. This inverse model itself required $40K$ interactions with the
environment for training, and boosted the SPL to 0.95. LAQ is able to navigate to the optimal goal (w/ SPL 0.82) but without the $40K$ online interaction samples
necessary to acquire the inverse model. It also performs better than clustering
transitions, doing which achieves an SPL of 0.57. The improvement is borne out
in visualizations in \figref{3d-nav}. LAQ correctly learns to go to the
nearer goal, even when the underlying experience came from a policy that
preferred the further away goal.

\section{Discussion}

Our theoretical characterization and experiments in five representative
environments showcase the possibility and potential of deriving 
goal-directed signal from undirected state-only experience. Here we discuss 
some scenarios which are fundamentally hard, and some avenues for future
research.

\textbf{Non-deterministic MDPs.} Our theoretical result relies on
a refinement where state-action transition probabilities are matched. However,
the latent action mining procedure in LAQ results in deterministic
actions. Thus, for non-deterministic MDPs, LAQ will be unable to achieve a strict refinement, leading to 
sub-optimal value functions.
However, note that this limitation isn't specific to our method, but applies 
to {\it any} deterministic algorithm that seeks to learn from 
observation only data. We formalize this concept and provide a 
proof in \secref{stoch_mdp}.

\textbf{Constraining evaluation of $V(s)$ to within its domain.} LAQ learns a
value function $V(s)$ over the set of states that were available in the
experience dataset, and as such its estimates are only accurate within this
set. In situations where the experience dataset doesn't span the entire state
space, states encountered at test time may fall out of the distribution used to
train $V(s)$, leading to degenerate solutions. In \secref{1-norm} we discuss a
density model based solution we used for this problem, 
along with an alternate parameterization of value networks which helps avoid degenerate solutions.

\textbf{Offline RL Validation.} Validation (\eg when to stop training)
is a known issue in offline RL~\citep{gulcehre2020rl}. Like other 
offline RL methods, LAQ suffers from it too. LAQ's use of Q-learning 
makes it compatible to recent advances~\citep{kumar2021workflow}
that tackle this validation problem.

\section{Related Work}
\seclabel{related}
Our work focuses on batch (or offline) RL with state-only data using a
latent-variable future prediction model. We survey works
on batch RL, state-only learning, and future prediction.

\textbf{Batch Reinforcement Learning.}
As the field of reinforcement learning has matured, batch
RL~\citep{lange2012batch, levine2020offline} has gained attention as
a component of practical systems. A large body of work 
examines solutions the problem of extrapolation error in batch RL settings. Advances in these works are complementary to our approach, as substantiated by our experiments with BCQ. { A more detailed discussion of batch RL methods can be found in \secref{related-cont}.}

\textbf{State-only Learning.}
Past works have explored approaches for dealing with the lack of
actions in offline RL when given {\it goal-directed} or {\it undirected}
state-only experience. Works in the former category rely on high quality behavior in the data, and suffer on sub-optimal data. Past work on state-only learning from undirected experience relies on either domain knowledge or state reconstruction and only show results with low dimensional states. {See \secref{related-cont} for continued discussion.}

\textbf{Future Prediction Models.} Past work from~\cite{oh2015action,
agrawal2016learning, finn2016unsupervised} (among many others)
has focused on building action conditioned forward models in pixel and latent spaces.
Yet other work in computer vision studies video prediction
problems~\citep{xue2016visual, castrejon2019improved}. Given the uncertainty in
future prediction, these past works have pursued variational (or latent
variable) approaches to make better predictions. Our latent
variable future model is inspired from these works, but we explore its
applications in a novel context.   

{\textbf{Latent MDP Learning}
One way to interpret our method is that of learning an approximate MDP homomorphism \citep{taylor2008bounding,ravindran2004approximate}. Other works have explored learning latent homorphic MDPs. These methods tend to focus on learning equivalent latent state spaces \citep{li2006towards,givan2003equivalence}. Most similarly to our work \citep{vanderpol2020plannable} also learns a latent action space, but relies on ground truth action data to do so.}

\textbf{Acknowledgement}: This paper is based on work supported by NSF
under Grant \#IIS-2007035.

\bibliography{refs}
\bibliographystyle{iclr2022_conference}

\appendix
\renewcommand{\thetable}{S\arabic{table}}%
\renewcommand{\thefigure}{S\arabic{figure}}%
\newpage

\section{Appendix}
\setlength{\intextsep}{4pt}

\subsection{Related Work Continued}
\seclabel{related-cont}
Discussion continued from \secref{related}:

\textbf{Batch Reinforcement Learning.} In recent times, \cite{gulcehre2020rl} and
\cite{fu2020d4rl} propose datasets and experimental setups for studying offline
RL problems. A large body of work 
examines solutions to the batch RL problem. Researchers have identified that
{\it extrapolation error}, the phenomenon in which batch RL algorithms incorrectly
estimate the value of states/actions not present in the
training batch, is a major challenge, and have proposed methods to tackle it,
\eg BCQ \citep{fujimoto2019off}, BEAR \citep{Kumar19BEAR},
IRIS from \citep{mandlekar2020IRIS}, and CQL \citep{KumarZTL20} among many others.
In contrast to these model-free methods, \cite{argenson2021modelbased,
rajeswaran2019meta, rafailov2020offline} learn a forward
predictive model from the batch data and use it for model predictive control.
These methods all approach the traditional batch RL problem, while we consider
a different and harder setting in which the action labels are unavailable.
Aforementioned advances in offline RL are complementary to our work. Offline
value learning approaches (such as CQL and BCQ) can serve as a drop-in
replacement for Q-learning in our pipeline and improve our results. In fact,
our experiments with BCQ substantiate this.

\textbf{State-only Learning.}
In line of work which studies learning form {\it goal-directed} state-only experience, researchers use imitation learning-based techniques
\citep{rado2020soil, TorabiWS18, edwards2019imitating, kumar2019learning}, learn
policies that match the distribution of visited states~\citep{gail, gaifo,
torabi2019imitation}, or use demonstrations to construct dense reward functions
\citep{shao2020concept, SermanetXL17, SinghYFL19, XieSLF18,
edwards2019perceptual}. These methods make
strong assumptions about the quality and goal-directed nature of the experience
data, and suffer in performance when faced with low-quality or undirected
experience.  

Instead of goal-directed experience our work tackles the problem of learning from undirected experience. Past work in this area employs Q-learning
to learn optimal behavior from sub-optimal data~\citep{chang2020semantic,
song2020grasping, edwards2020estimating}. \cite{chang2020semantic} and
\cite{song2020grasping} use domain specific insights. 
\cite{edwards2020estimating} rely on being able to generate the next state and
only demonstrate results in environments with low-dimensional states. Instead,
our work maps transition tuples to discrete latent actions and can thus easily
work with high-dimensional observations such as \rgb images.
\newpage
\subsection{Proofs}
\seclabel{proofs}
\begin{proof}
\seclabel{proof3.2}
  \textbf{Proof for Lemma \ref{equal_policy}.} We will start by showing that for any policy $\pi_{\hat{M}}$ on $\hat{M}$, there exists a policy $\pi_{M}$ on $M$ such that $V^{\pi_{\hat{M}}}_{\hat{M}}(s) = V^{\pi_{M}}_{M}(s)$, for all $s$.\\
  To do this we need to introduce the idea of \textit{fundamental actions}, which are classes of actions which have the same state and reward transition distributions in a given state. If we have a fundamental action $b$, corresponding to some state and reward distributions, let $\alpha(b,s) \subseteq \mathbf{A}$ give the set of actions in $\mathbf{A}$ that have the matching state and reward transition distributions,
  \[\mathop\forall_{a \in \alpha(b,s)} p(s',r|s,a) = p(s',r|s,\alpha(b,s)_1).\]
  Similarly, let $\hat\alpha(b,s) \subseteq \mathbf{\hat{A}}$ give the set of actions in $\mathbf{\hat{A}}$ belonging to $b$ in state $s$.
   In any given state, there are at most $\text{min}(|\mathbf{A}|,|\mathbf{\hat{A}}|)$ fundamental actions for $M$ and the union of all actions belonging to all fundamental actions gives the set of actions that make up the original action space. For our MDP let's denote $B(s)$ as the set of fundamental actions in the state $s$ for $M$, and $\hat{B}(s)$ as the set of fundamental actions in the state $s$ for $\hat{M}$. 
   Let $\beta(s,a)$, and $\hat\beta(s,a)$ be functions which return the set of actions which correspond to the same fundamental action containing as $a$ in state $s$ for $M$, and $\hat{M}$ respectively. This means,
  \begin{equation}
  \mathop\bigcup_{b \in B(s)}\alpha(b,s) = A, 
  \mathop\forall_{b \in B(s)} \mathop\forall_{b' \ne b}\alpha(b,s) \cap
  \alpha(b',s) = \emptyset,
  \text{ and} \mathop{\forall}_{a \in A, s} \exists_{b \in B(s)} | a \in \alpha(b,s).
  \eqlabel{props}
  \end{equation}

  With this notation out of the way we can construct a policy $\pi_{\hat{M}}$ from $\pi_M$, which achieves the same value in $\hat{M}$ as $\pi_M$ does in $M$. We do this by constructing $\pi_{\hat{M}}$ such that the probability distributions over each \textit{fundamental action} is equivalent. We define this policy as
  \[
    \pi_{M}(a|s) = \frac{1}{|\beta(s,a)|}\sum_{\hat{a} \in \hat{\beta}(s,a)}\pi_{\hat{M}}(\hat{a}|s).
  \]
 Following \cite{sutton2018rli} we can define the value function for a given policy as 
\[V^{\pi_{M}}_{M}(s) = \mathbb{E}_{\pi_{M}}[G_t | S_t = s],\]
  \[V^{\pi_{M}}_{M}(s) = \sum_{a \in \mathbf{A}} \pi_{M}(a|s) \sum_{s',r}p(s',r|s,a)\big[r + \gamma V^{\pi_{M}}_{M}(s')\big].\]
  From the properties in \eqref{props},
  we know a sum over fundamental actions will count each action exactly once, so we can write this sum as 
 \begin{equation}
   \eqlabel{value_fundamental}
  V^{\pi_{M}}_{M}(s) = \sum_{b \in B(s)}\sum_{a \in \alpha(b,s)} \pi_{M}(a|s) \sum_{s',r}p(s',r|s,a)\big[r + \gamma V^{\pi_{M}}_{M}(s')\big].
 \end{equation} 
  substituting in the constructed policy we have
  \[V^{\pi_{M}}_{M}(s) = \sum_{b \in B(s)}\sum_{a \in \alpha(b,s)} 
  \bigg[\frac{1}{|\beta(s,a)|}\sum_{\hat{a} \in \hat{\beta}(s,a)}\pi_{\hat{M}}(\hat{a}|s) \bigg]
  \sum_{s',r}p(s',r|s,a)\big[r + \gamma V^{\pi_{M}}_{M}(s')\big].\]
  since the third sum is over $\hat{\beta}(s,a)$ with $a \in \alpha(b,s)$, we can rewrite this sum as over $\hat{a} \in \hat{\alpha}(b,s)$, giving
  \[V^{\pi_{M}}_{M}(s) = \sum_{b \in B(s)}\sum_{a \in \alpha(b,s)} 
  \bigg[\frac{1}{|\beta(s,a)|}\sum_{\hat{a} \in \hat{\alpha}(b,s)}\pi_{\hat{M}}(\hat{a}|s) \bigg]
  \sum_{s',r}p(s',r|s,a)\big[r + \gamma V^{\pi_{M}}_{M}(s')\big].\]
  For the final sum, from the definition of fundamental actions, we know $p(s',r|s,a) = p(s',r|s,\alpha(b,s)_1) = \hat{p}(s',r|s,\hat\alpha(b,s)_1)$, so we can rewrite that term as 
  \[\sum_{s',r}\hat{p}(s',r|s,\hat\alpha(b,s)_1)\big[r + \gamma V^{\pi_{M}}_{M}(s')\big].\]
  Crucially, in the second sum (over $a \in \alpha(b,s)$), $\beta(s,a)$ is the same for all $a \in \alpha(b,s)$, so the term in brackets can be treated as a constant. Similarly, since $|\beta(s,a)| = |\alpha(b,s)|$ we can simplify the second sum leaving
  \[V^{\pi_{M}}_{M}(s) = \sum_{b \in B(s)}
  \sum_{\hat{a} \in \hat{\alpha}(b,s)}\pi_{\hat{M}}(\hat{a}|s)
  \sum_{s',r}\hat{p}(s',r|s,\hat\alpha(b,s)_1)\big[r + \gamma V^{\pi_{M}}_{M}(s')\big].\]
  This gives the definition of $V^{\pi_{\hat{M}}}_{\hat{M}}(s)$ using the same decomposition as equation \eqref{value_fundamental}, meaning \[V^{\pi_{M}}_M(s) = V^{\pi_{\hat{M}}}_{\hat{M}}(s).\]
One can show the opposite direction, that a policy there exists a policy $\pi_{M'}$ for any policy $\pi_M$ such $V^{\pi_{M}}_M(s) = V^{\pi_{\hat{M}}}_{\hat{M}}(s),$ with a symmetric construction.

\end{proof}

{Note that Lemma \ref{equal_policy} holds true regardless of the stochasticity of the policy, as the constructed policy matches probability of the original policy for taking each fundamental action.}\newline

\seclabel{proof3.1}
\textbf{Proof for \theoremref{th1}.} Under the new MDP $\hat{M}$, Q-learning
will learn the same optimal value function value function as learned under MDP
$M$, \ie $\forall_s V^{*}_{\hat{M}}(s) = V^{*}_{M}(s)$.
\begin{proof}
This follows from Lemma \ref{equal_policy} by contradiction. Assume there is a
state $s'$ where $V^{*}_{\hat{M}}(s) \ne V^*_{M}(s)$.  Two cases must be considered.
In the first case,
$V^{*}_{\hat{M}}(s) > V^{*}_{M}(s)$. We will notate the optimal
  policy in $\hat{M}$ as $\tilde{\pi}_{\hat{M}}^*$ (\ie $V^{*}_{\hat{M}} = V^{\tilde{\pi}_{\hat{M}}^{*}}_{\hat{M}}$
) and the optimal policy in $M$ as $\pi^*_M$.
We know there must exist a policy $\tilde{\pi}^{*}_M$ such that
  $V^{\tilde{\pi}^{*}_M}_{M}(s) = V^{\tilde{\pi}_{\hat{M}}^{*}}_{\hat{M}}(s)$ from Lemma
\ref{equal_policy}. We arrive at a contradiction because
  $V^{\tilde{\pi}^{*}_M}_{M}(s) > V^{\pi_{M}^{*}}_{M}(s)$, so $\pi_M^*$ could not have
been the optimal policy on $M$.
The other direction follows a symmetric argument. If
  $V^{\tilde{\pi}_{\hat{M}}^{*}}_{\hat{M}}(s) < V^{\pi^{*}_M}_{M}(s)$, we know there must be a
policy $\pi^{*}_{\hat{M}}$ in $\hat{M}$ such that $V^{\pi^{*}_M}_{M}(s) =
V^{\pi^*_{\hat{M}}}_{\hat{M}}(s)$. This implies $V^{\pi^*_{\hat{M}}}_{\hat{M}}(s) >
  V^{\tilde{\pi}_{\hat{M}}^*}_{\hat{M}}(s)$, meaning that $\tilde{\pi}_{\hat{M}}^*$ could not have been
the optimal policy.  
Since Q-learning is known to converge to the optimal value function \cite{watkins1989learning}, Q-learning on $\hat{M}$ will converge to the original value function $V^*_M$.
\end{proof}

\newpage
\subsection{State Only Learning From Stochastic MDPs}
\seclabel{stoch_mdp}
Here we will briefly prove that problem of deducing the optimal value function
of an MDP $M$, from some state-only experience dataset $D$, cannot be
solved in general for non-deterministic MDPs. We notate $\mathbb{D}$ as the
space of all \textit{complete} state-only datasets, and $\mathbb{V}$ as the space
of all value functions. By \textit{complete} state-only dataset, we mean
datasets in which all possible transition triples $(s,s',r)$ appear. Note that exactly one complete dataset
exists for each MDP. We denote $D_{M}$ as the complete dataset corresponding to an MDM $M$.

In short, the idea is to construct a single
state-only dataset which ambiguously could have been produced from two
different MDPs which have different optimal value functions. Since no function
deterministic function can model two outputs from the same input, no such
function can exist.

\begin{theorem}
\theoremlabel{stoch_proof}
For any function $f: \mathbb{D} \rightarrow \mathbb{V}$ which maps from the set
of of state-only datasets, $\mathbb{D}$, to a set of value functions. There
exists some MDP $M$, with corresponding dataset $D_{M}$ such that
$f(D) \ne V^*_M$.
\[\nexists f \mid \forall_M f(D_{M}) = V^*_M.\]
\end{theorem}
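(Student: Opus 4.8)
The plan is to prove Theorem \ref{stoch_proof} by an explicit counterexample combined with a pigeonhole/ambiguity argument. The core obstruction is that the map from MDPs to their complete state-only datasets is \emph{not injective}: two distinct non-deterministic MDPs can induce the exact same set of transition triples $(s,s',r)$, yet have different optimal value functions. Since any $f:\mathbb{D}\rightarrow\mathbb{V}$ receives only the dataset as input, it cannot distinguish these two MDPs, so it must return one value function, which is necessarily wrong for at least one of them. First I would formalize the claim I am negating: the theorem asserts $\nexists f$ such that $\forall_M\; f(D_M)=V^*_M$, so I would assume for contradiction that such an $f$ exists and then exhibit two MDPs $M_1,M_2$ with $D_{M_1}=D_{M_2}$ but $V^*_{M_1}\ne V^*_{M_2}$.

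The heart of the argument is the construction of $M_1$ and $M_2$. I would build two small MDPs on a shared state space that agree on the \emph{support} of their transition distributions (hence produce identical complete datasets, since completeness only records \emph{which} triples $(s,s',r)$ are possible, not with what probability) but differ in the \emph{probabilities} assigned to those transitions. Concretely, I would take a state $s$ with a single action whose next-state/reward distribution places mass on two outcomes: a high-reward successor and a low-reward successor. In $M_1$ the action lands on the high-reward outcome with large probability; in $M_2$ it lands on the low-reward outcome with large probability. Both MDPs have the same set of reachable triples from $s$, so $D_{M_1}=D_{M_2}$, but because the expected return weights outcomes by probability, $V^*_{M_1}(s)\ne V^*_{M_2}(s)$. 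I would verify that the complete datasets coincide (same triples appear in both) and compute the two optimal values to confirm they differ.

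Given such a pair, the contradiction is immediate: if $f$ satisfied the universal property, then $f(D_{M_1})=V^*_{M_1}$ and $f(D_{M_2})=V^*_{M_2}$; but $D_{M_1}=D_{M_2}$ forces $f(D_{M_1})=f(D_{M_2})$, whence $V^*_{M_1}=V^*_{M_2}$, contradicting the construction. Since $f$ is a function, it cannot produce two different outputs from the same input. This closes the proof. I would emphasize in the write-up that this argument isolates exactly why the determinism assumption in Theorem \ref{th1} is essential: the refinement machinery there relies on matching full transition \emph{probabilities} $\hat{p}(s',r\mid s,\hat{a})=p(s',r\mid s,a)$, information that a state-only dataset simply does not carry in the stochastic case.

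The main obstacle I anticipate is not logical but definitional: I must be careful about what a ``complete'' dataset records. The argument only works if completeness captures the \emph{set} of possible transitions rather than their frequencies or probabilities; if instead the dataset somehow encoded empirical transition probabilities, the two MDPs would be distinguishable and the counterexample would collapse. I would therefore state precisely that a complete dataset is the set $\{(s,s',r) : p(s',r\mid s,a)>0 \text{ for some } a\}$, matching the $(s,s',r)$-triple formulation used throughout the paper, and confirm that under this definition the two constructed MDPs yield identical datasets. A secondary, minor point is ensuring both MDPs are genuinely valid (well-defined discounted returns, a shared discount $\gamma$), which is routine for any finite stochastic MDP.
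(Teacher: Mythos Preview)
Your proposal is correct and follows essentially the same approach as the paper: assume such an $f$ exists, exhibit two MDPs whose complete state-only datasets coincide (because the dataset records only the \emph{support} of transitions, not their probabilities) yet whose optimal value functions differ, and derive a contradiction from $f$ being a function. The paper's concrete counterexample uses three states and two actions with $M_1$ deterministic and $M_2$ stochastic, while your single-state/single-action construction with two stochastic variants is a bit simpler, but the argument and the key definitional observation about what ``complete'' means are identical.
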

\begin{proof}
We proceed by contradiction, assume any $f: \mathbb{D} \rightarrow \mathbb{V}$ such that 
\[\forall_M f(D_M) = V^*_M.\]
If we can construct $M_1$ and $M_2$ such that $V^*_{M_1} \ne V^*_{2}$, and
$D$ such that $D = D_{M_1} = D_{M_2}$, then $f$ cannot produce
the correct value function for $f(D)$ in all cases, from the definition of a
function.  We will now show a very simple construction which exhibits this
property. Consider an MDP $M_1$, with 3 states, ${s_1,s_2,s_t}$, and 2 actions
$a_1,a_2$. $s_t$ is terminal, and gives reward 1, other states give reward $0$.
The transition dynamics of the two actions are given below. 

\begin{minipage}[c]{0.5\textwidth}
\centering
\captionof*{table}{\textbf{Table:} $a_1$ transitions for $M_1$}
\begin{tabular}{ |c|c|c|c| } 
 \hline
       & $s_1$ &$s_2$ & $s_t$ \\ \hline
 $s_1$ & 0     &  1    & 0 \\ \hline
 $s_2$ & 0 &   0 & 1 \\
 \hline
\end{tabular}
\end{minipage}
\begin{minipage}[c]{0.5\textwidth}
\centering
\captionof*{table}{\textbf{Table:} $a_2$ transitions for $M_1$}
\begin{tabular}{ |c|c|c|c| } 
 \hline
       & $s_1$ &$s_2$ & $s_t$ \\ \hline
 $s_1$ & 0     &  0    & 1 \\ \hline
 $s_2$ & 1 &   0 & 0 \\
 \hline
\end{tabular}
\end{minipage}
For simplicity, assume $M_1$ has a discount factor of $0.9$. This means $V^*(s_1) = V^*(s_2) = 1$.

We define $M_2$ as identical to $M_1$, only with stochastic transitions.\\
\begin{minipage}[c]{0.5\textwidth}
\centering
\captionof*{table}{\textbf{Table:} $a_1$ transitions for $M_2$}
\begin{tabular}{ |c|c|c|c| } 
 \hline
       & $s_1$ &$s_2$ & $s_t$ \\ \hline
 $s_1$ & 0     &  1    & 0 \\ \hline
 $s_2$ & 0 &   0 & 1 \\
 \hline
\end{tabular}
\end{minipage}
\begin{minipage}[c]{0.5\textwidth}
\centering
\captionof*{table}{\textbf{Table:}  $a_2$ transitions for $M_2$}
\begin{tabular}{ |c|c|c|c| } 
 \hline
       & $s_1$ &$s_2$ & $s_t$ \\ \hline
 $s_1$ &      &  0.9    & 0.1 \\ \hline
 $s_2$ & 1 &   0 & 0 \\
 \hline
\end{tabular}
\end{minipage}
In this MDP, with a discount factor of 0.9, $V^*(s_2) = 1$, but $V^*(s_1) = 0.1 + 0.9*0.9 = 0.91$.

Since both $M_1$ and $M_2$ share the same states, actions, and possible
transitions, $D_{M_1} = D_{M_2}$. So we have satisfied our condition that $D =
D_{M_1} = D_{M_2}$, and $V^*_{M_1} \ne V^*{M_2}$. Thus, no $f$ can model the
value functions for both MDPs in general.
\end{proof}

\newpage
\subsection{Grid World Behavior \vs Purity}
\seclabel{grid-world-mse}
\figref{plot} plots the MSE in value function and the proportion of
states with correct implied actions as a function of the noise in
the action labels. 
\begin{figure}[!h]
    \centering
    \includegraphics[width=\textwidth]{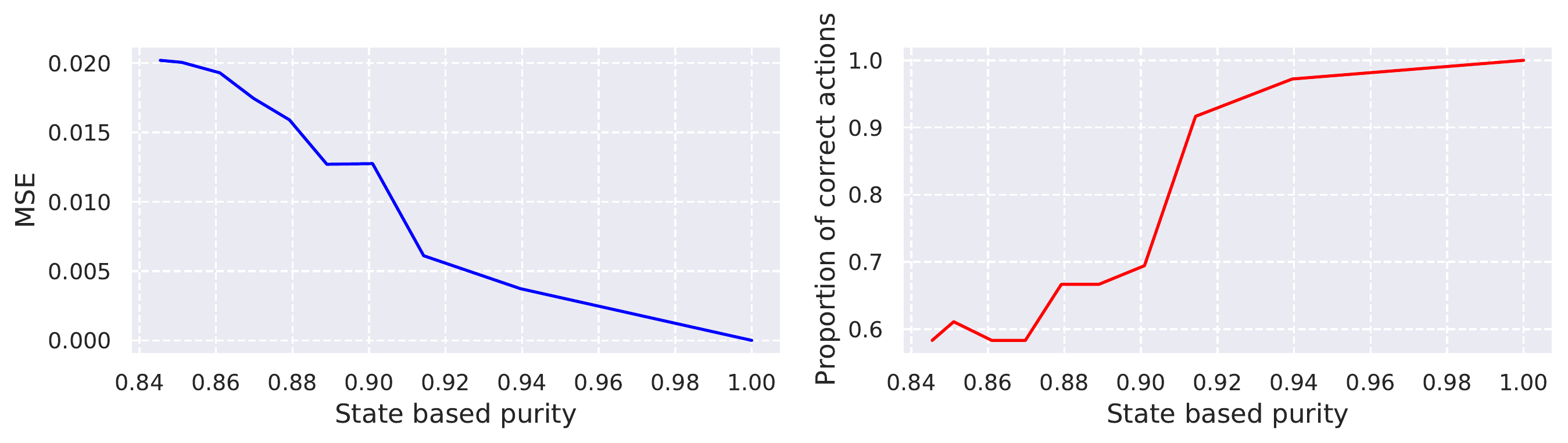}
    \caption{We plot correctness of value function estimate and behavior as a
    function of state-based purity of the intervening actions used for
    Q-learning. Left plot shows the mean squared error (lower is better)
    between the obtained value function and the optimal value function. Right
    plot shows fraction of states in which the learned value function induces
    the optimal action (higher is better). Both value function and behaviors
    become worse as state-based purity decreases.
    }
    \vspace{-5mm}
    \figlabel{plot}
\end{figure}

\newpage
\subsection{LAQ in Stochastic MDPs}
\seclabel{grid-world-stochasticity}

{We have conducted experiments on stochastic MDPs in the gridworld environment. Specifically, we do this by adding sticky actions, such that with a certain probability (called the stickiness parameter), the environment executes the previous action as opposed to the given action. We run LAQ with data from this stochastic environment and examine a) purity of latent actions, b) quality of learned value functions, and c) correctness of implied behavior.

\begin{figure}[!h]
    \centering
    \includegraphics[width=\textwidth]{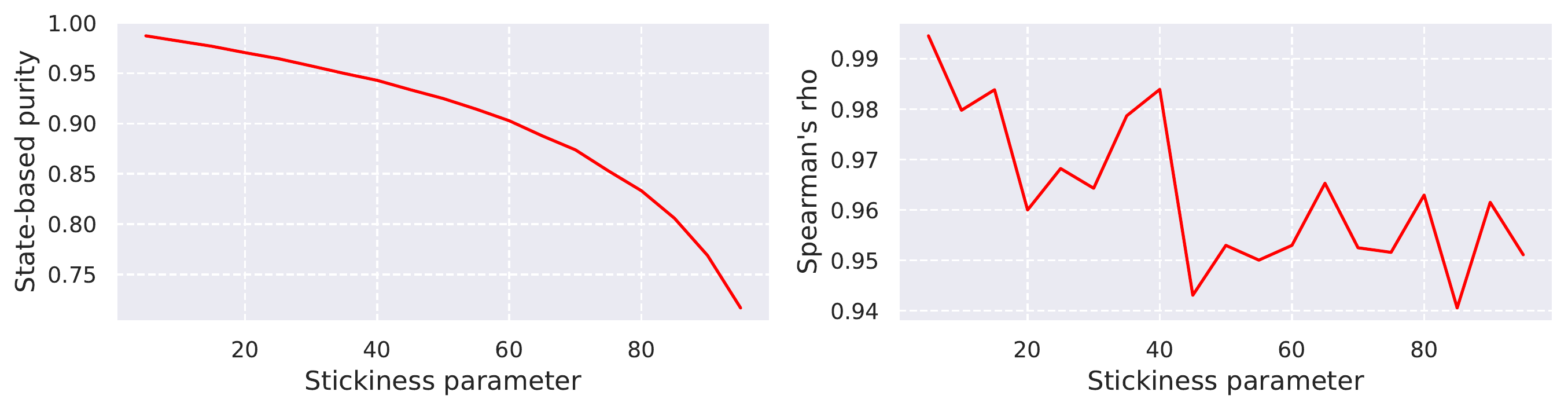}
    \includegraphics[width=\textwidth]{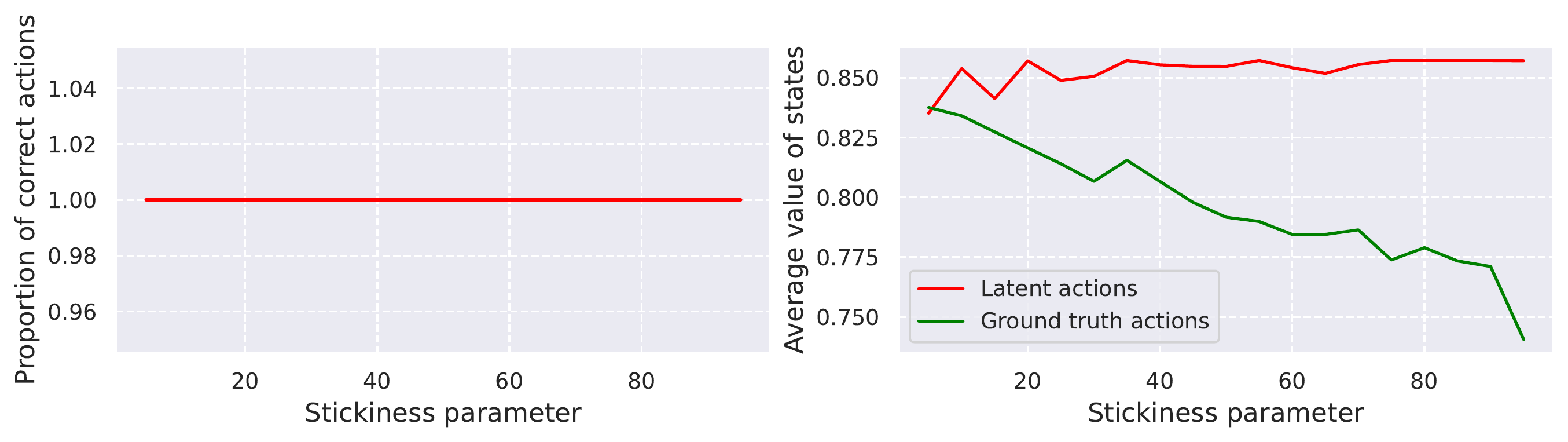}
    \caption{{As a function of stochasticity in environment, we plot:
    state-based purity of the latent actions (top left), Spearman's rank correlation between LAQ learned value function and ground truth value function (top right),
    correctness of implied behavior by the LAQ learned value function (bottom right), and
    average state-value of LAQ learend value function, and the ground truth value function (bottom left).}}
    \figlabel{stochasticity1}
\end{figure}

As expected, \figref{stochasticity1} (left) shows that the state-based purity of the learned latent actions falls off as the stochasticity (stickiness) increases. However, these impure latent actions have little effect on the Spearman's rank correlation (top right). The Spearman's rank correlation does decrease but remains reasonably high even when the stickiness parameter is set to 95\%. While the ordering of the values of the different states doesn't change by much, we note increasing amount of over-estimation in the values (bottom right). This is expected as with increasing stochasticity, there is an increasingly large gap between what LAQ thinks it can control, and what it can actually control. However, the behavior implied by LAQ is still always correct (bottom left), as the over-estimation is uniform over all states. Thus, it is possible to get good performance from LAQ in some stochastic environments. However, it is possible to also construct simple scenarios where LAQ, and for that matter any deterministic algorithm, will suffer as we describe in \secref{stoch_mdp}.}

\newpage
\subsection{Environment Details}
\seclabel{env-details}
\textbf{2D Grid World.} We use the environment and data from~\secref{expval}.
We use the $(x,y)$ coordinates as our state for this environment. We use a
multi-layer perceptron for $f_\theta$ and L2 loss for $l$.

\textbf{2D Continuous Control Navigation.} We use the Maze2D data from
D4RL~\cite{fu2020d4rl}. Observation space here is $(x,y, v_x, v_y)$ \ie location and
velocity of the agent, and action space is force along $x$ and $y$ directions.
We use a frame skip of 3. We use a multi-layer perceptron for $f_\theta$ and L2
loss for $l$. In the embodiment transfer variant, we swap the agent for a
4-legged ant, also from D4RL. The ant embodiment is far more challenging to
control, with an 8-d action space and 29-d observation space.

\textbf{Atari Game.} We work with Freeway. We generated our own data using the
protocol described in~\cite{agarwal2020optimistic}. We turned off sticky
actions and store frames both at the default resolution ($84 \times 84$) and
full resolution ($224 \times 224$).  Other settings are same as is typical:
stack last 4 frames to represent the observations, and use frame-skip of 4.  We
use a convolutional encoder-decoder model for $f_\theta$ and predict raw future
observations. We use L2 distance in the pixel space as our loss function $l$.

\textbf{3D Visual Navigation.} We work with the branching environment from
\cite{chang2020semantic} in the AI Habitat Simulator~\cite{habitat}, and use
the provided dataset of first-person trajectories as our pre-recorded
experience dataset. Following \cite{chang2020semantic}, we employ low-level
controllers for deriving behaviors from the learned value functions. We use a
convolutional encoder-decoder for $f_\theta$, and L2 loss in pixel space as $l$.

\textbf{Kitchen Manipulation.} In this task a 9-DOF Franka arm can interact
with several different objects physically simulated kitchen. The observations
are 24-d, containing the end-effector position and state of various objects in
the environment (microwave door angle, kettle position, etc.). The action space
represents the 9-d joint velocity. In cross-embodiment experiments we replace
the Franka arm with a hook. The observation space remains the same while the
action space becomes 3-d position control.

For experiments in kitchen manipulation, we utilize the GMM based solution described in \secref{1-norm}. 
For all environments, we throw out the provided action labels when learning our
models.

\newpage
\subsection{Data Collection}
\seclabel{data}

\textbf{2D Grid World.} The setting for this experiment is a $6~\times~6$ grid with 8 actions, corresponding to moving in the 4 cardinal directions and 4 diagonal directions. The agent starts in the top left (0,0), and gets reward 0 everywhere, except for in the bottom right, (5,5). Reaching the bottom right gives the agent reward 1 and terminates the episode. In the starting square (0,0), the agent has probability 0.5 of moving right, and probability 0.5 of moving down. When the agent is on the top or bottom edge of the maze (row~=~0 or row~=~5) it moves right with probability 0.9, and takes a random action with probability 0.1. When it is on the left or right edge of the grid (column = 0 or column = 5), it moves down with probability 0.9 and takes a random action with probability 0.1. Otherwise, it is in the interior of the grid, where it takes an action away from the goal (randomly chosen from: up, left, or up-left) with probability 0.9, and one of the remaining actions otherwise (randomly chosen from: down, right, down-right, up-right, down-left). This policy is rolled out for 20,000 policies to generate the data for methods described in main paper Section 3.2 and Figure~1.

\textbf{Atari Game.} 
Data for the Freeway environment had to be additionally collected, as native high resolution resolution images of the episodes from \cite{agarwal2020optimistic} are not readily available. For this, we re-generated the data using the protocal described in~\cite{agarwal2020optimistic}, without sticky actions. Then, the high resolution data was collected by taking the action sequences from the re-generated dataset and executing them in an emulator, while storing the native resolution images. Because we re-generated the data without sticky actions, the high resolution episodes are perfect recreations of the low resolution episodes.

\textbf{Visual Navigation.} 
The data used for the visual navigation experiment was generated using the same protocol as \citet{chang2020semantic}. Agents are tasked to reach one of two goals ($G_{\text{near}}$ and $G_{\text{far}}$) in a visually realistic simulator \cite{habitat}. Navigating to $G_{\text{near}}$ is the optimal path as it is nearer to the agent's starting location.

The dataset contains 3 types of trajectories, $T_1$, $T_2$, and $T_3$, representing 50\%, 49.5\%, and 0.5\% of the trajectories in the dataset respectively . $T_1$ takes a sub-optimal path to $G_{\text{near}}$, $T_3$ takes the optimal path to $G_{\text{near}}$, and $T_2$ navigates to $G_{\text{far}}$. 

\textbf{2D Continuous Control (Maze2D)} We use the \textit{Maze2D} dataset from D4RL as is.

\textbf{Kitchen Manipulation.} 
The data used for the Kitchen Manipulation environment comes from the \textit{partial} version of the D4RL FrankaKitchen dataset. To facilitate cross-embodiment transfer we convert the state representation to contain end-effector location instead of joint angles.
\newpage
\subsection{Algorithm}
\seclabel{algorithm}

\begin{algorithm}
  \caption{LAQ}
  \begin{algorithmic}[1]
    \State{Given dataset $D$ of $(o_t,o_{t+1},r)$ triples}
    \For {each epoch}
    \Comment{Latent Action Mining}
    \For {sampled batch $(o_{t}, o_{t+1}, r) \sim D$}
    \State $L(o_{t},o_{t+1}) \leftarrow  \min_{\hat{a} \in \bm{\hat{A}}} \ l(f_{\theta}(s,\hat{a}),s')$
    \State $\theta \leftarrow \theta - \alpha\nabla_{\theta}L(o_{t},o_{t+1})$ 
    \Comment{\secref{action}}
    \EndFor
    \EndFor
    \State {\textbf{let} $\hat{g}(o_t,o_{t+1}) = \argmin_{\hat{a} \in \bm{\hat{A}}} l(f_{\theta}(o_t,\hat{a}),o_{t+1})$}
    \State {$\hat{D} = \{(o_t,o_{t+1},r,\hat{g}(o_t,o_{t+1})) \mid (o_t,o_{t+1},r) \in D \}$}
    \Comment{Latent Action Labeling}
    \For {each epoch}
    \For {sampled batch $(o_{t}, o_{t+1}, r,\hat{a}) \sim \hat{D}$}
    \Comment{Q-learning with Latent Actions}
    \State Q-learning update to learn $Q(s, \hat{a})$
    \EndFor
    \EndFor
    \State $V(s) = \max_{\hat{a} \in \bm{\hat{A}}} \ Q(s, \hat{a})$
  \end{algorithmic}  
\end{algorithm} 

\newpage
\subsection{Latent Action Quality} 
\seclabel{purity}
We first measure the effectiveness of our latent action mining process by
judging the extent to which the induced latent actions are a refinement of the
original actions. We measure this using the {\it state-conditioned purity} of
the partition induced by the learned latent actions. 

{In a given state, for
any latent action $\hat{a}$, there must be some ground truth action which most
frequently mapped to $\hat{a}$. We define the purity of $\hat{a}$ as the
proportion of the most frequent action among all actions mapped to $\hat{a}$.
For example, in a given state $s$ if a set of actions $[0,0,0,1,2]$, were
mapped to latent action $\hat{a}$, then $0$ is the most frequent action mapped
to $\hat{a}$ and thus the purity of $\hat{a}$ would be $0.6$. For a given
state, the purity of an entire set of latent actions is the weighted mean of
purity of individual latent actions. Overall purity is the average of
all state wise purities weighted by how often the states appear in the dataset.}

We extend this definition of state-conditioned purity to continuous or
high-dimensional states by measuring the validation accuracy of a function $g$
that is trained to map the high-dimensional state (or observation) $s$, and the
associated latent action $\hat{a}$ to the actual ground truth action $a$.
{Training such a function induces an implicit partition of the state space.
Learning to predict the ground truth action from the latent action $\hat{a}$
within this induced partition estimates the most frequent ground truth action,
and accuracy measures its proportion, \ie purity. This exact procedure reduces
to the above definition for discrete state spaces, but also handles continuous
and high-dimensional states well. For continuous action spaces, we measure the
mean squared error in action prediction instead of classification accuracy.}

\tableref{action} reports the purity (and MSE for continuous action
environment) obtained by our proposed future prediction method. For reference,
we also report the purity obtained when using \textit{single action} that maps
all samples into a single cluster, and 2 \textit{clustering} methods that
cluster either concatenation of the two observations \ie $[o_{t+1}; o_{t}]$, or
the difference between the two observations \ie $[o_{t+1} - o_{t}]$. 
We use 8 latent actions for all environments except the FrankaKitchen
environment for which we use 64 because of its richer underlying action space.

In general, our forward models are effective at generating a latent action space
that is a state-conditioned refinement of the original action space. This is
indicated by the improvement in state-conditioned purity values over using a
single action or naive clustering.  For the 2D Grid World and 2D Continuous
Navigation, clustering in the correct space (state difference \vs state
concatenation) works well as expected. But our future prediction model, which
directly predicts $s_{t+1}$ and doesn't use any domain specific choices, is
able to outperform the corresponding clustering method. We also observe large
improvements over all baselines for the challenging case of environments with
high-dimensional state representations: Freeway and 3D Visual Navigation.

\renewcommand{\arraystretch}{1.1}
\begin{table*}[!h]
\centering
\small 
\setlength{\tabcolsep}{6pt}
\caption{We report the state-conditioned action purity (higher is better, MSE
for continuous action case where lower is better), of latent actions for
different approaches: single action, clustering concatenated observations,
clustering difference in observations, and the proposed future prediction
models from \secref{action}. We note the utility of the future prediction
model for the challenging case of Freeway and 3D Visual Navigation
environments. See \secref{purity} for a full discussion.}
\tablelabel{action}
\resizebox{\textwidth}{!}{
\begin{tabular}{lcccccccc}
\toprule
\textbf{Environment}                & \textbf{Observation}   & \textbf{Action} & \textbf{Purity}     & \textbf{Single}    & \textbf{Clustering} & \textbf{Clustering} & \textbf{Future} \\
                                    & \textbf{Space}         & \textbf{Space}  & \textbf{Metric}     & \textbf{Action}    & $[o_t, o_{t+1}]$    & $[o_{t+1}-o_{t}]$   & \textbf{Prediction} \\
\midrule
2D Grid World                       & $xy$ location          & Discrete, 8     & Purity ($\uparrow$) & 0.827              & 0.851               & {\bf 1.000}         & {0.998}      \\
Freeway                             & $210 \times 160$ image & Discrete, 3     & Purity ($\uparrow$) & 0.753              & 0.778               & 0.773               & {\bf 0.907}  \\
3D Visual Navigation (Branching)    & $224 \times 224$ image & Discrete, 3     & Purity ($\uparrow$) & 0.783              & 0.839               & 0.859               & {\bf 0.928}  \\
\arrayrulecolor{black!30}\midrule
2D Continuous Control               & $xy$ loc. \& vel      & Continuous, 2       & MSE ($\downarrow$) & 2.207               & 2.188               & {\bf 0.325}            & 0.905  \\
Kitchen Manipulation                & 24-d State             & Continuous, 8       & MSE ($\downarrow$) & 0.015               & 0.015               & 0.015            & {\bf 0.014}  \\
\arrayrulecolor{black} \bottomrule
\end{tabular}}
\vspace{-5mm}
\end{table*}

\newpage
\subsection{Analysis of Learned Latent Actions}
\seclabel{model}

We analyze the latent actions learned for the Freeway environment.

We visualize our future prediction model's predictions for the Freeway environment in \figref{reconstruction}. In line with our expectations, the one action future prediction model and the latent action future prediction model are both able to reconstruct the background and the vehicles perfectly. At the same time, the one action future prediction model fails to reconstruct the agent accurately, whereas the latent action future prediction model is able to reconstruct the agent almost perfectly. This provides evidence for the effectiveness of our proposed latent action mining approach at discovering pure action groundings.

\begin{figure*}[h]
    \centering
    \includegraphics[width=\textwidth]{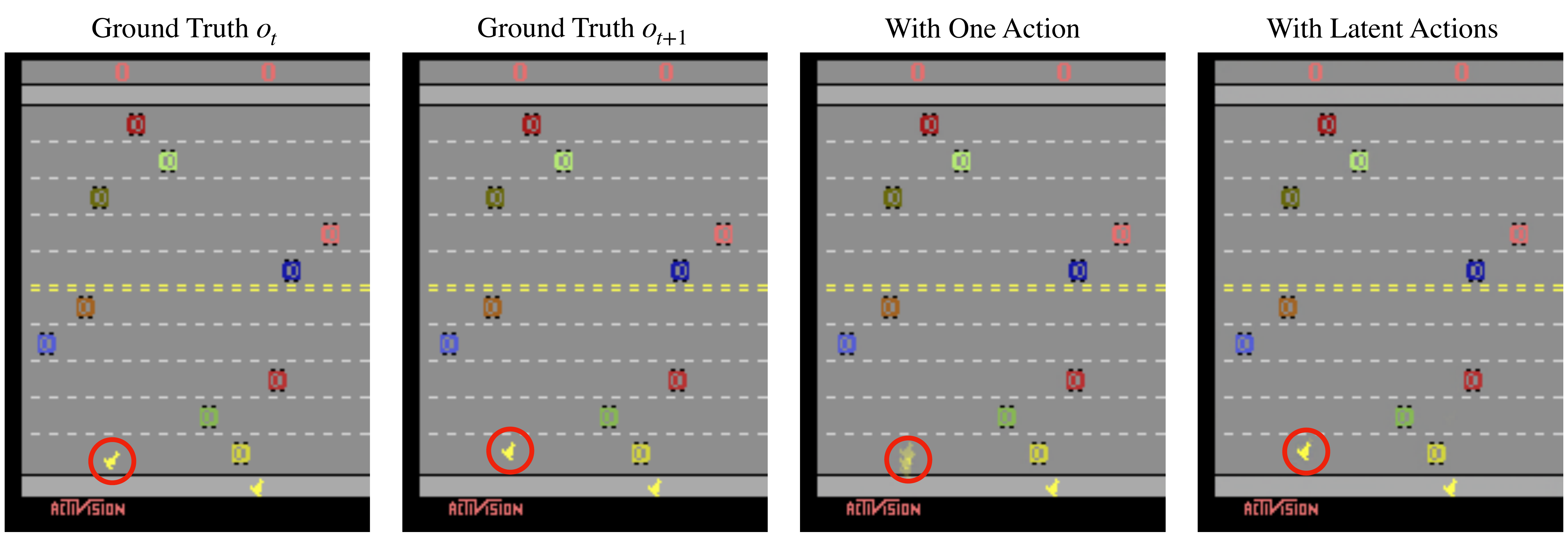}
    \caption{We visualize the Freeway future prediction models' reconstructions for $o_{t+1}$. 
    From left to right, the ground truth $o_t$, the ground truth $o_{t+1}$, the reconstruction for $o_{t+1}$ 
    by the future prediction model with one action, the reconstruction for $o_{t+1}$ 
    by the future prediction model with latent actions. The agent is circled in each image.
    }
    \figlabel{reconstruction}
\end{figure*}

\vspace{10pt}

\begin{wrapfigure}[17]{r}{0.5\textwidth}
  \begin{center}
    \includegraphics[width=0.48\textwidth]{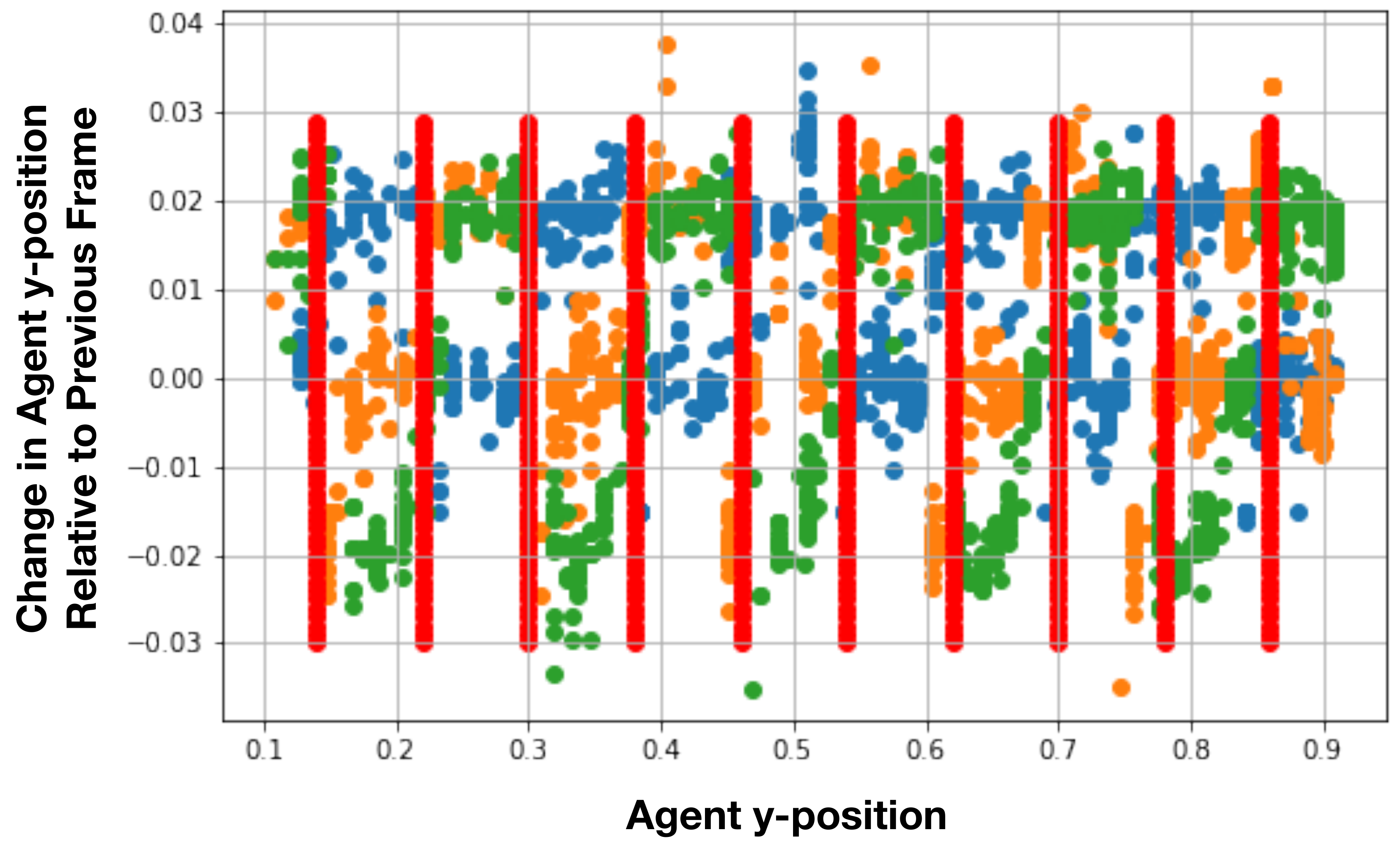}
  \end{center}
  \caption{Visualization of the latent actions learned in Freeway. X-axis is the agent's y-position, y-axis is the displacement of the agent (which is indicative of ground truth action).}
  \figlabel{freeway_scatter}
\end{wrapfigure}

Furthermore, we visualize the learned latent actions for Freeway in \figref{freeway_scatter}. 
In Freeway, the agent can only move along the y-axis, and consequently, the environment action space only has three actions: move up, move down, and no-op. This means that the agent's y-displacement between the current frame and the previous frame directly corresponds to the ground truth action taken. In this visualization, we visualize the the chosen latent action (by color: blue, orange, or green) as a function of the agent's y-position in the current frame (x-axis) and the y-displacement relative to the previous frame (y-axis). 
Note that as mentioned in the paper, we learn the value functions over the top three most dominant actions to stabilize training; for this reason, the visualization only consists of three latent actions.

Within each vertical region marked by the red bars, we see that the latent actions are split into distinct clusters based on the agent's y-displacement. Because the agent's y-displacement directly corresponds to the ground truth action taken, this indicates that given the agent's y-position, the learned latent actions encode information about the ground truth action. Hence, we qualitatively confirm that the state-based purity of the latent actions is high.

\newpage
\subsection{Value Function Details}
\seclabel{vis_value_fn}
\textbf{Value Function Model Selection}
The numbers reported in \tableref{spearmans} are the
95\textsuperscript{th} percentile
Spearman's correlation coefficients over the course of training. 
If training is stable and converges, this corresponds to taking the final value, and 
in the case that training is not stable and diverges, this acts as a form of early stopping.
We take the 95\textsuperscript{th} percentile as opposed to the maximum to eliminate outliers.

Below we present visualizations of value functions learned from LAQ.

\textbf{Freeway.} We visualize the value functions learned using our latent actions for Freeway. \figref{sup_figure_freeway_val} plots the values over the course of an episode. 
In Freeway, the agent has to move vertically up and down the screen to cross a busy freeway, receiving reward when it successfully gets to the other side. In a single episode, the agent can cross the freeway multiple times; each time the agent makes it to the other side, the agent's location is reset to the original starting location, allowing the agent to attempt to cross the freeway once again. For this reason, we see the value increase as the agent gets closer to the other side of the road, and then drop as soon as its position resets to the starting location.
As evident, the peaks of the learned value function correspond highly to the environment reward.
\begin{figure}[h]
    \centering
    \includegraphics[width=\textwidth]{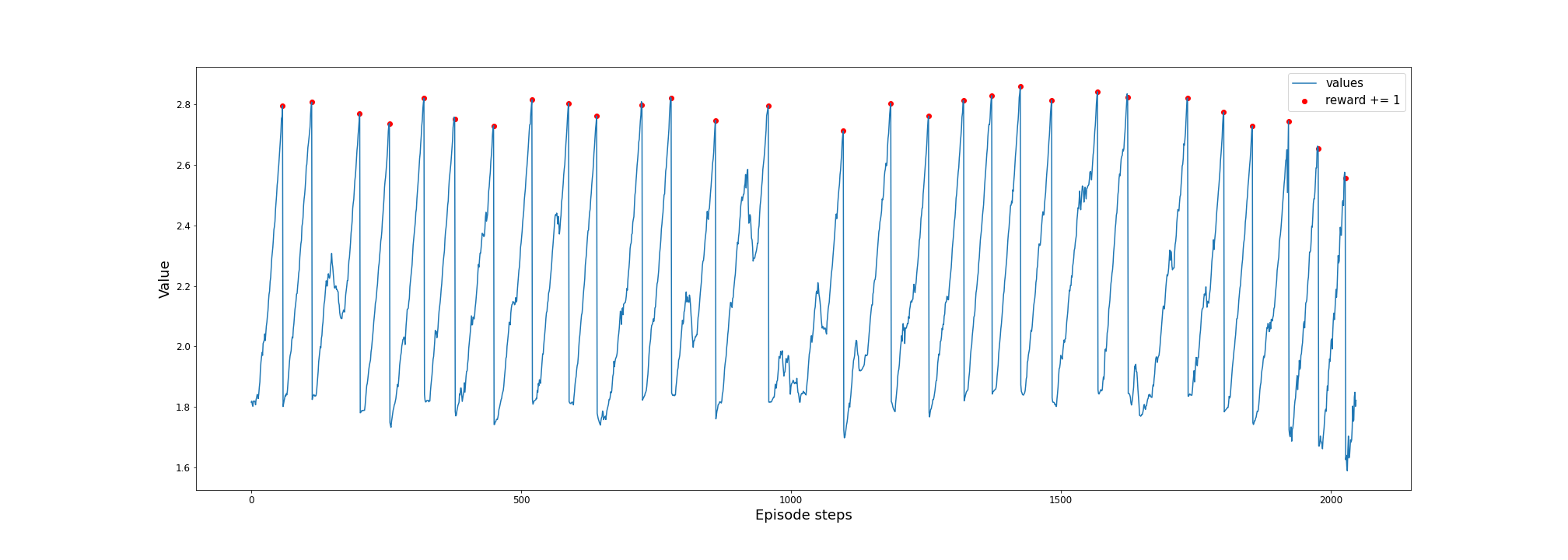}
    \caption{We visualize the value predicted by our learned value function over the course of one episode of Freeway. The red points correspond to when the agent receives a reward from the environment.}
    \figlabel{sup_figure_freeway_val}
\end{figure}

\begin{wrapfigure}[15]{r}{0.5\textwidth}
  \begin{center}
    \includegraphics[width=0.48\textwidth]{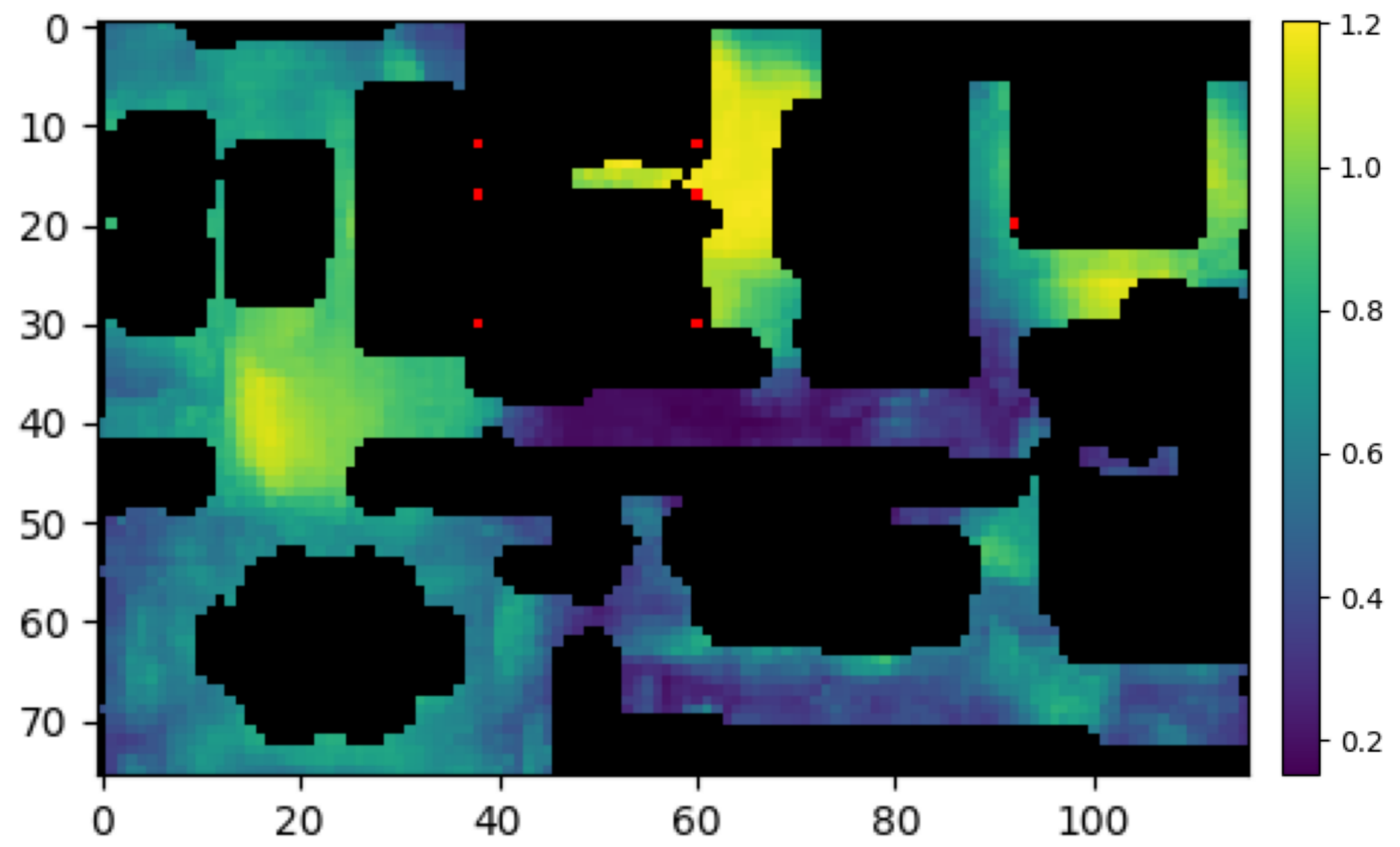}
  \end{center}
  \caption{Visualization of the learned value function for 3D Visual Navigation.}
  \figlabel{sup_figure_vis_nav}
\end{wrapfigure}
\textbf{3D Visual Navigation.}
\figref{sup_figure_vis_nav} visualizes the learned value map in the 3D Visual Navigation branching environment from~\cite{chang2020semantic}. As depicted in \figref{3d-nav}, there are two goals: $G_{near}$ and $G_{far}$. The figure on the right illustrates that the value function learned by utilizing our latent actions correctly assigns high value to the regions surrounding the two goal locations, and low value elsewhere. Additionally, we learn the value functions with DQN using a dataset obtained by using a sub-optimal policy which prefers to go to the goal further away rather than the goal close by. Despite this sub-optimality, the learned value function correctly assigns a higher value to the nearby goal than the goal which is further away.

\newpage

\begin{wrapfigure}{r}{0.5\textwidth}
    \includegraphics[width=0.48\textwidth]{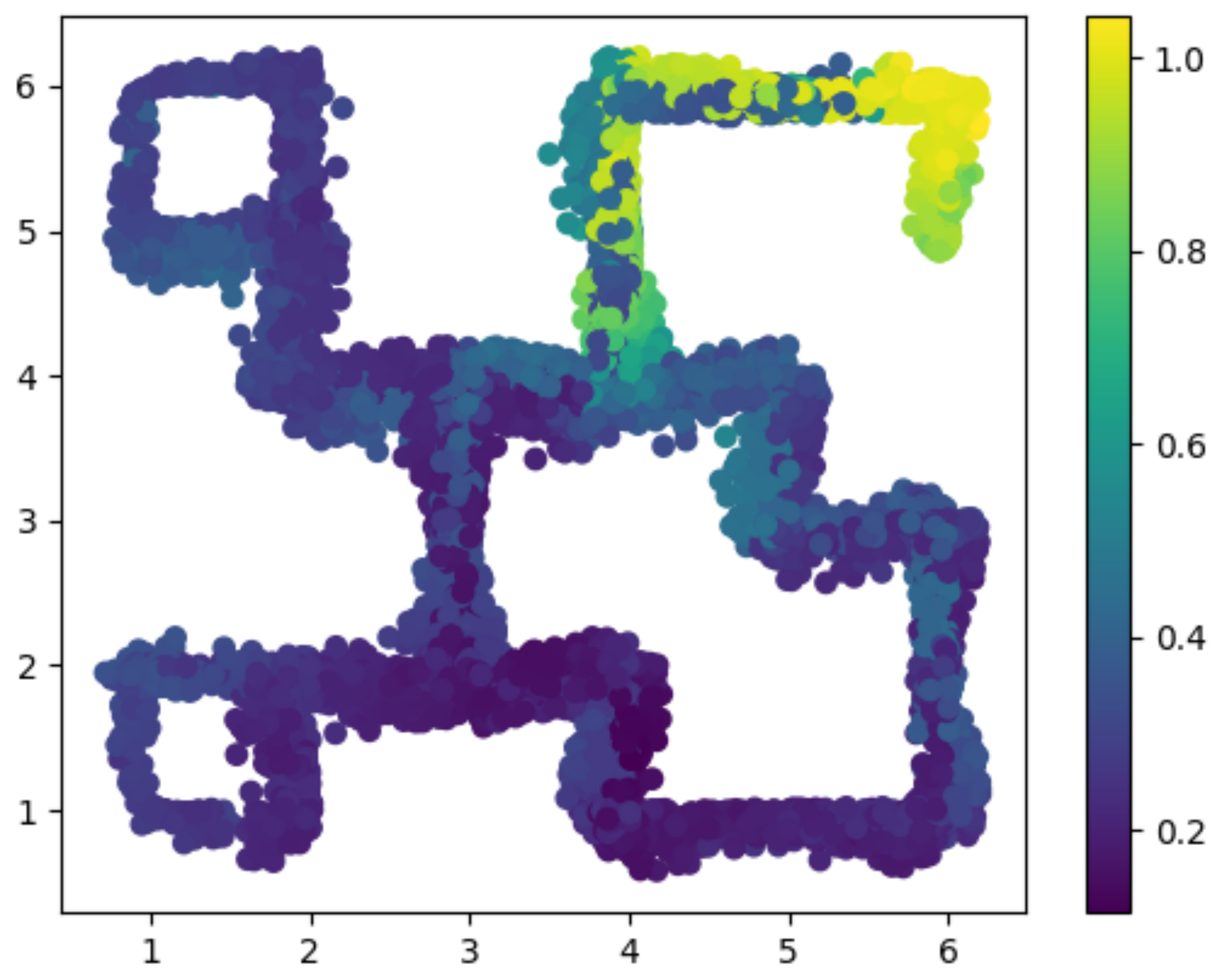}
  \caption{Visualization of the learned value function for 2D Continuous Control.}
  \figlabel{sup_figure_maze2d_val}
\end{wrapfigure}
\textbf{2D Continuous Control.} \figref{sup_figure_maze2d_val} shows a visualization of the value function learned using latent actions for the 2D Continuous Control environment. While the observation space of this environment is $(x,y, v_x, v_y)$ \ie location and velocity of the agent, we produce this visualization over just the location of the agent. Based on just the location of the agent, the optimal value function would be a monotonically decreasing function as the distance from the goal location increases. In this particular environment, the goal of the agent is to get to the top-right corner of the maze. The visualization shows that the learned value function produces high values around this goal region at the top-right corner of the maze, and gradually lower values the farther away you go. This figure visually is in line with our quantitative results in \tableref{spearmans} which show that the value function learned using latent actions in the 2D Continuous Control environment highly correlates to that learned using ground truth actions.

\begin{wrapfigure}{r}{0.5\textwidth}
  \begin{center}
    \includegraphics[width=0.48\textwidth]{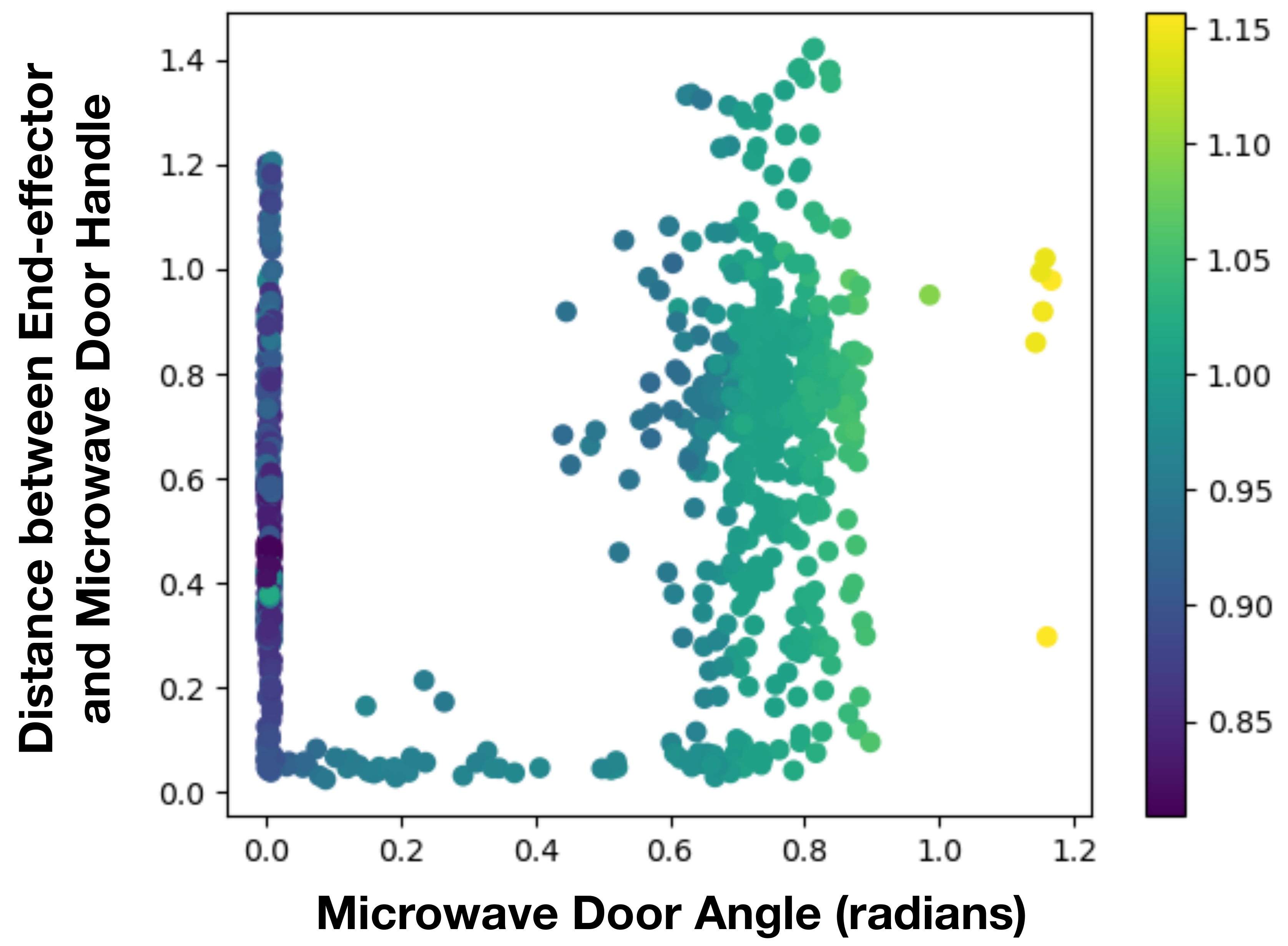}
  \end{center}
  \caption{Visualization of the learned value function in Kitchen Manipulation. The x-axis is the microwave angle in radians, the y-axis is the distance between the end-effector and the microwave door handle, with the colors corresponding to the magnitude of the values.}
  \figlabel{sup_figure_kitchen_heatmap}
\end{wrapfigure}

\textbf{Kitchen Manipulation.} We visualize the values as a function of both the angle that the microwave door makes with respect to its starting state as well as the distance between the end-effector and the microwave door handle in \figref{sup_figure_kitchen_heatmap}. 
The task here is to open the microwave door: the agent receives a binary reward when the angle that the microwave door makes with respect to its starting state (i.e. closed microwave door) is above a threshold (approx. 0.7 radians), and zero otherwise. 
As expected, we see that the predicted value of a state increases as the angle of the microwave door increases. 
The end-effector position does not start out at the microwave door handle, but rather is initialized a fixed distance away from the handle. As a result, the agent must first move the end-effector to the microwave door handle, before interacting with the door handle to open it. 
In addition to this, we hypothesize that as the distance between the end-effector and the microwave door handle decreases, the predicted value should increase.
While not as obvious as the relation between the microwave door angle and the value, we see some indication that as the distance between the end-effector and the microwave door handle decreases, the value increases.

\newpage
\subsection{Dealing with Extrapolation Error} \seclabel{1-norm}
Given the reward structure utilized in densified reinforcement learning (see \secref{behavior}), if
there exist states which produce spuriously high values of $V(s)$, the optimal
policy in the new MDP may erroneously seek these states. Here we present two
ways to combat this phenomenon: \\
\textbf{1)} Fit a density model to constrain
high reward regions to the distribution used to train $V(s)$ \\
\textbf{2)}
Re-parameterize the value function such that out of distribution states are
likely to have low value. 

\textbf{Density Model: } We add an additional component to the reward
computation to incentivize behavior to remain within the region of the state
space covered by the demonstrations (where the learned value function is
in-distribution). For the kitchen manipulation environment we build a density
model over the end effector position using a 2 component Gaussian mixture
model. If a state $s$ has a less than 1\% probability according to the GMM
density model, then we assign $V(s) = 0$, for the update described in
\secref{rl_exps}. We apply this shaping to all methods in \figref{behavior} for
the kitchen environment. We give sparse reward the same benefit of this shaping
by giving reward $-1$ outside of the GMM distribution, and $0$ inside. Sparse
task reward remains the same.

\textbf{Value function re-parameterization} For sparse reward tasks, we can
parameterize the final value prediction as $1-\|\phi(s)\|_2$. Where $\phi$ is a
neural network based featurization of the state. This ensures high values are
only predicted when the featurization of the state is close to the zero vector.
States which are out of distribution may induce random features, but these will
result in lower predicted values.

Results on the kitchen manipulation environment are presented in
\figref{1-norm}. We can see that all methods solve the task poorly in
this environment when value functions are unconstrained. Using a density based
model (GMM) or the ``1-norm'' parameterization of the value function allow the
task to be solved consistently.

\begin{figure}[h]
\centering
\insertH{0.35}{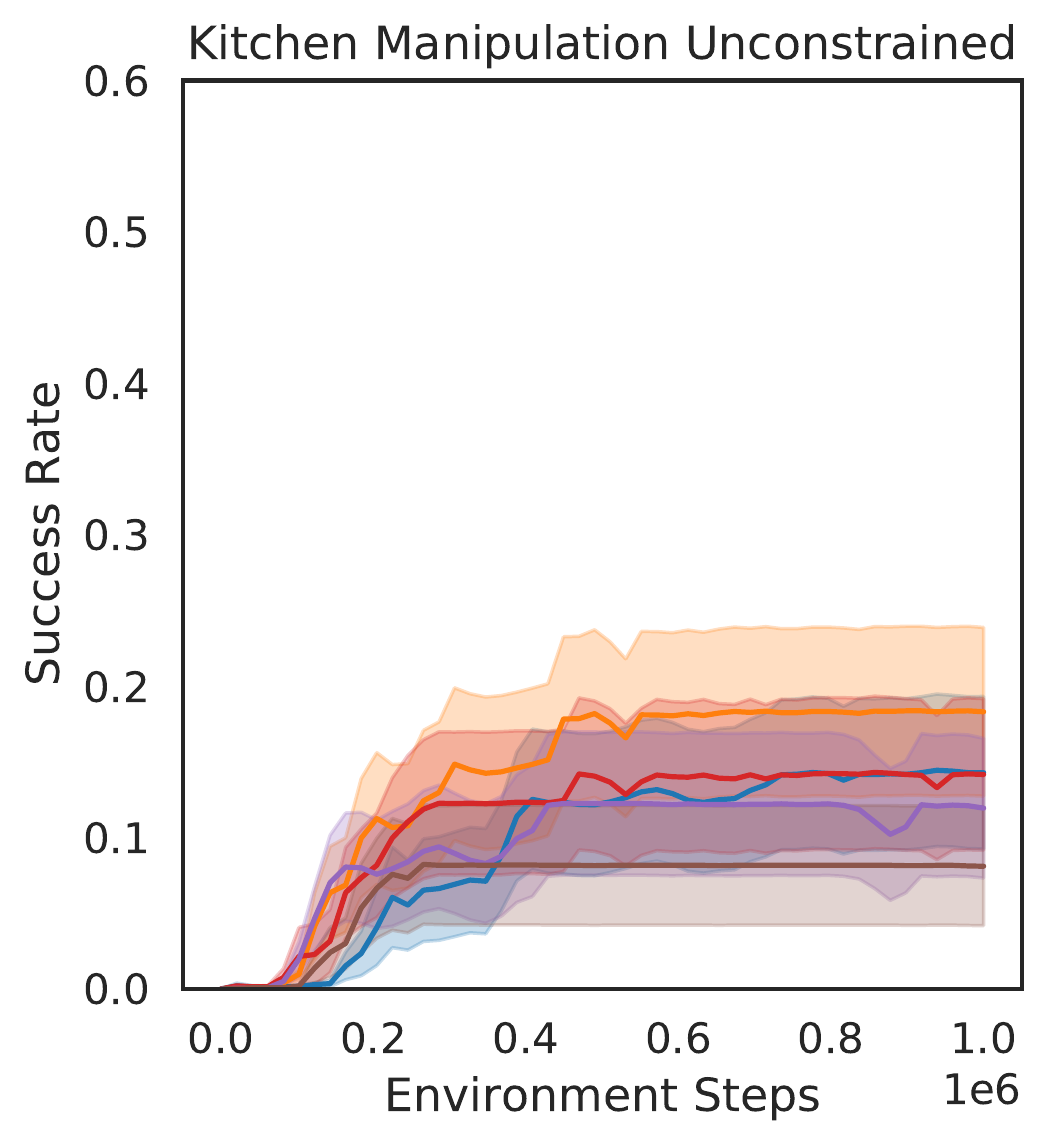} \insertH{0.35}{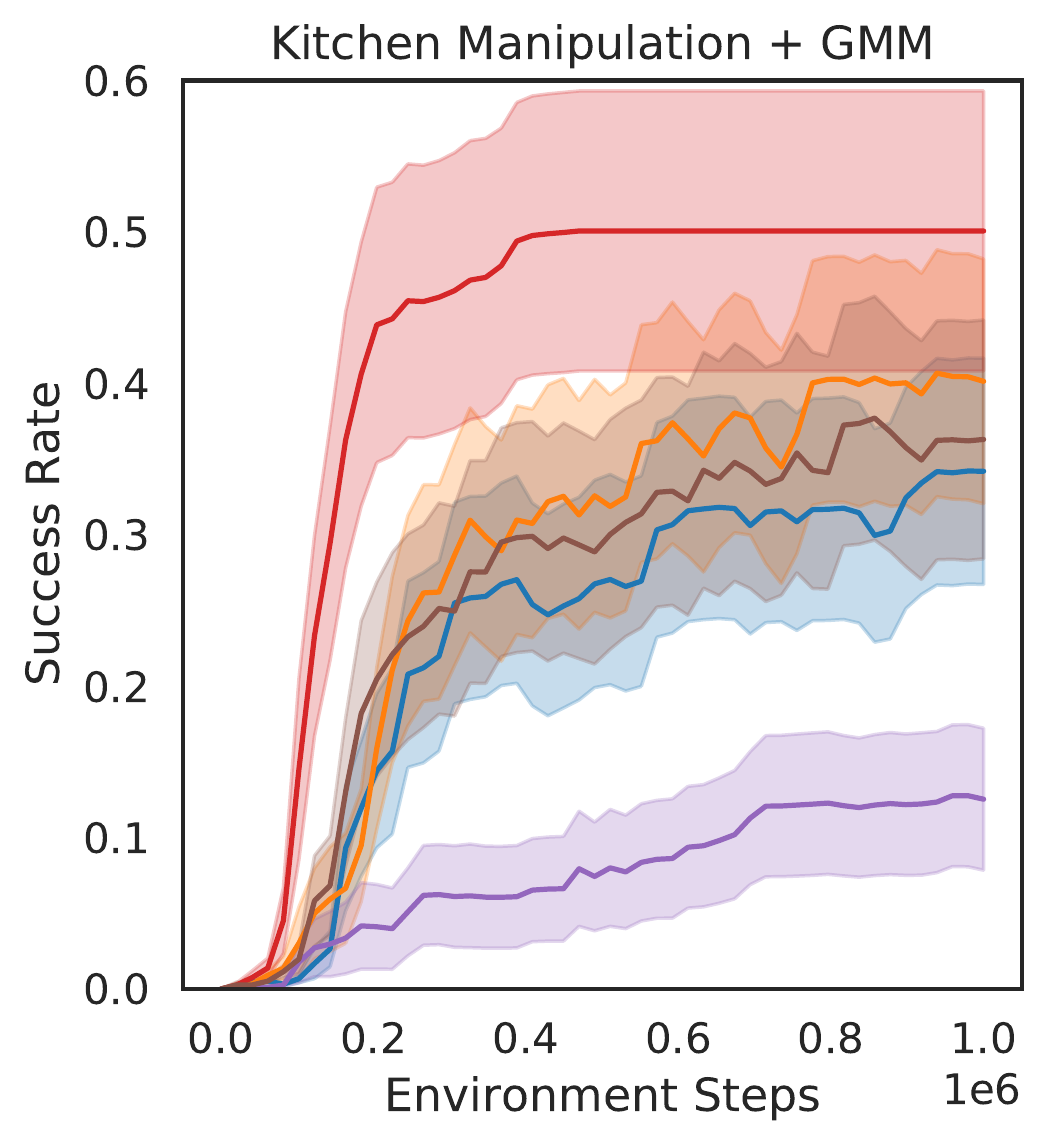} \insertH{0.35}{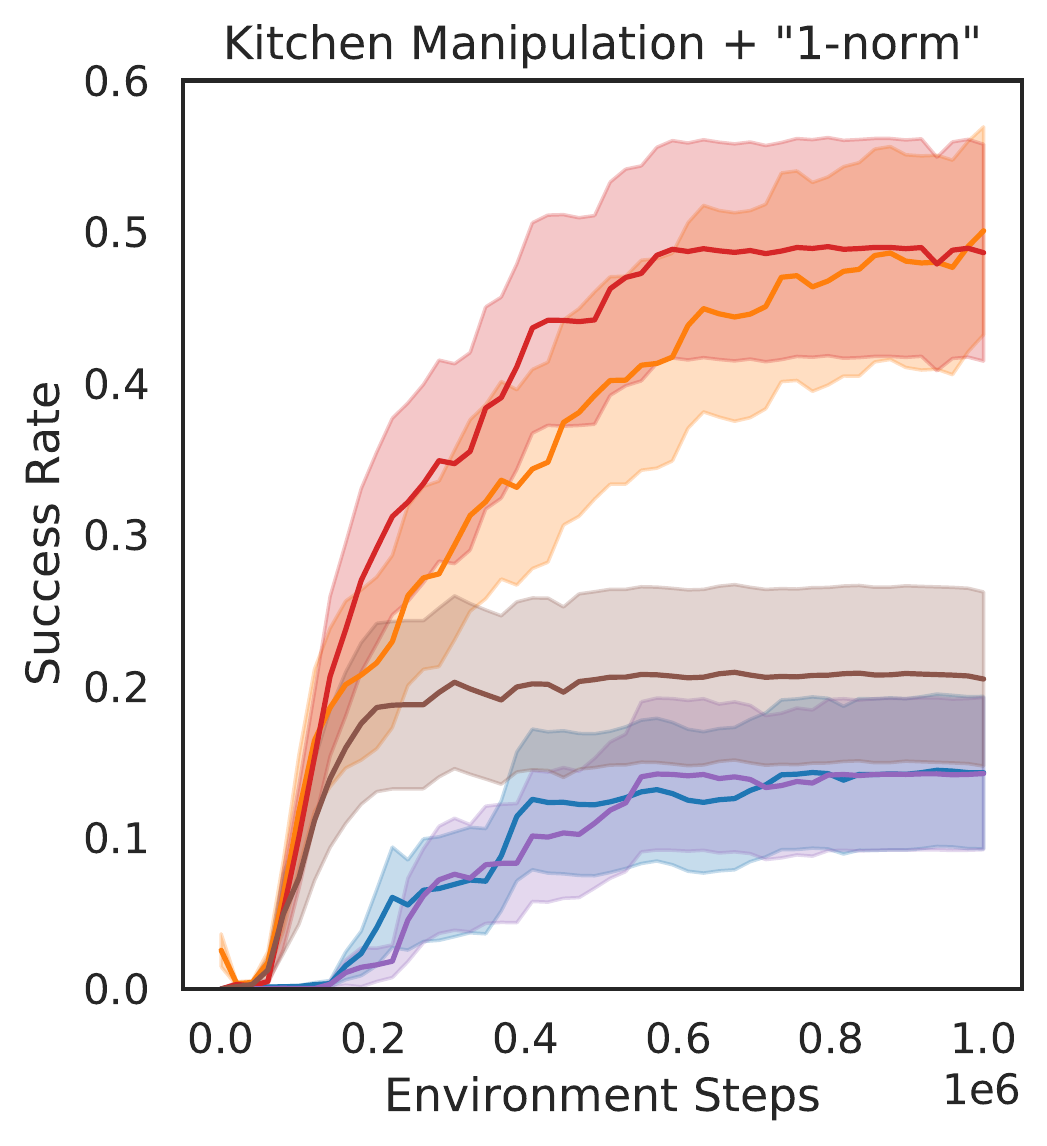}\\
\insertW{1}{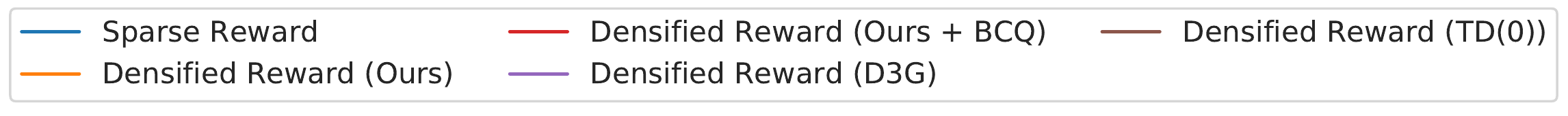}\\
\caption{We show learning curves for acquiring behavior using different methods for dealing with out of distribution samples in the kitchen manipulation environment: unconstrained (left), density model based reward shaping (center), and ``1-norm'' parameterization of the value function (right). Results are averaged over 5
seeds and show 95\% confidence intervals. See \secref{1-norm} for details.}
\figlabel{1-norm}
\vspace{-4mm}
\end{figure}

\end{document}